\theoremstyle{plain}
\newtheorem{theorem}{Theorem}[section]
\newtheorem{lemma}[theorem]{Lemma}
\theoremstyle{definition}
\theoremstyle{remark}
\DeclareMathOperator*{\argmax}{arg\,max}
\newcolumntype{P}[1]{>{\centering\arraybackslash}p{#1}}
\newcolumntype{L}{>{\centering\arraybackslash}m{3cm}}
\journal{Information Sciences}
\begin{document}

\begin{frontmatter}



\title{Tabular Diffusion Counterfactual Explanations} 


\author{Wei Zhang} 

\affiliation{organization={Electrical Engineering, Columbia University},
            addressline={116th and Broadway}, 
            city={New York},
            postcode={10025}, 
            state={NY},
            country={U.S.A.}}

\author{Brian Barr} 

\affiliation{organization={Capital One},
            addressline={1680 Capital One Dr}, 
            city={Mc Lean},
            postcode={22102}, 
            state={VA},
            country={U.S.A.}}

\author{John Paisley} 

\affiliation{organization={Electrical Engineering, Columbia University},
            addressline={116th and Broadway}, 
            city={New York},
            postcode={10025}, 
            state={NY},
            country={U.S.A.}}

\begin{abstract}
Counterfactual explanations methods provide an important tool in the field of {interpretable machine learning}. Recent advances in this direction have focused on diffusion models to explain a deep classifier. However, these techniques have predominantly focused on problems in computer vision. In this paper, we focus on tabular data typical in finance and the social sciences and propose a novel guided reverse process for categorical features based on an approximation to the Gumbel-softmax distribution. Furthermore, we study the effect of the temperature $\tau$ and derive a theoretical bound between the Gumbel-softmax distribution and our proposed approximated distribution. We perform experiments on several large-scale credit lending and other tabular datasets, assessing their performance in terms of the quantitative measures of interpretability, diversity, instability, and validity. These results indicate that our approach outperforms popular baseline methods, producing robust and realistic counterfactual explanations.
\end{abstract}


\begin{highlights}
\item We propose a novel counterfactual generation algorithm for tabular datasets using Gumbel-softmax re-parameterization in controllable diffusion models. Our method permits gradient backpropagation, which resembles the classifier guidance in the Gaussian case.  
\item We introduce an approximation to the Gumbel-softmax distribution and derive a tight bound. We also study the effect of temperature $\tau$ in the Gumbel-softmax distribution. 
\item We experiment with our method on four large-scale tabular datasets. The results demonstrate that our method can achieve competitive performance using widely-adopted metrics for counterfactual generation. 
\end{highlights}

\begin{keyword}
Counterfactual Generation \sep Controllable Diffusion Models \sep Explainable Machine Learning



\end{keyword}

\end{frontmatter}



\section{Introduction}
Deep neural networks have revolutionized many fields, perhaps most notably in computer vision and natural language processing. Despite its extraordinary performance, the frequent lack of a deep model's explainability prevents it from being widely adopted in regulated fields such as Fintech. Practitioners in those fields are interested in not only the decisions given by the black-box model, but also the reasons behind the decisions. This is necessary for transparency of the factors impacting the decision, and explaining alternatives that may produce different outcomes. 

Many methods have been developed to improve transparency of back-box models. A number of works generate feature importance based on local approximations \cite{ribeiro2016should}, global approximations \cite{ibrahim2019global}, gradients attributions \cite{shrikumar2017learning,sundararajan2017axiomatic} and SHAP values \cite{lundberg2017unified}. Other methods focus on interventions in causal regimes \cite{liu2024actionability, liu2018delayed} and the construction of additive models using neural networks \cite{agarwal2021neural,radenovic2022neural,chang2021node, zhang2024gaussian}.

In this paper, we focus on tabular counterfactual explanations (CEs), a post-hoc method that answers the question ``What changes can be made to an input so that its output label changes to the target class?'' CEs aim to explain a classifier $f:\mathbb{R}^d\rightarrow \lbrace 0,1\rbrace$ by generating a counterfactual sample $\hat{x}$ such that the predicted label is flipped with minimal changes to the input ad defined by a metric $d(\cdot,\cdot)$. This can be characteristically formulated as
\begin{equation}\label{eqn:cf_obj}
    \arg\min_{\widehat{x}} d(x, \hat{x}) \quad \text{subject to} \quad f(\widehat{x}) = y_{\mathrm{target}}.
\end{equation}
\citet{wachter2017counterfactual} cast this framework into an optimization problem and directly back-propagate the gradients of the classifier and distance constraints into the feature space. This approach treats each feature as a continuous variable and thus does not directly apply to categorical features. Other methods explicitly deal with categorical features and generate the counterfactual explanations using graphs \cite{poyiadzi2020face}, prototypes \cite{van2021interpretable}, multi-objective functions \cite{dandl2020multi}, rule-based sets \cite{guidotti2018local}, point processes \cite{mothilal2020explaining} and random forests \cite{fernandez2020random}.

Deep generative models such as VAEs \cite{kingma2013auto} also play a key role because of their high-fidelity generative power. \citet{joshi2019towards} and \citet{antoran2020getting} propose methods that search in the latent space of a VAE to generate counterfactuals. \citet{pawelczyk2020learning} focus explicitly on tabular data and use conditional VAEs as the generator for a target class. Methods in this line of work build on VAE architectures and rely on an efficient searching algorithm. 

Instead, like \citet{wachter2017counterfactual} we work directly in the feature space, but approach the problem from the perspective of diffusion modeling \cite{ho2020denoising, song2020denoising}. In the continuous image domain \citet{dhariwal2021diffusion} have introduced classifier guidance on the reverse process for continuous features such as image data, while \citet{augustin2022diffusion} built upon this framework with a counterfactual constraint on the reverse process. The proposed methods generate high-fidelity counterfactual images for a vision classifier. Explainable diffusion models for categorical tabular data have received less consideration. While diffusion models have been extensively studied for categorical tabular data, e.g., \cite{hoogeboom2021argmax, sun2022score,dieleman2022continuous,kotelnikov2023tabddpm,regol2023diffusing}, this line of work rarely intends to provide explanations for a classifier. 

Two notable recent investigations in this area include \cite{gruver2024protein,schiff2024simple}. \citet{gruver2024protein} focus on controllable discrete diffusion models in protein design by introducing a learnable mapping function that projects a discrete vector onto a continuous representation. The resulting representation is treated as a continuous variable and is diffused through the Gaussian distribution. \citet{schiff2024simple} also focuses on discrete data by treating a one-hot vector as continuous.

We propose a novel tabular diffusion model for counterfactual explanations that leverages Gumbel-softmax re-parameterization \cite{jang2016categorical}. Our contributions are three-fold: 
\begin{enumerate}[itemsep=0pt,leftmargin=*]
    \item Our method permits gradient backpropagation, and the resulting reverse process resembles the classifier guidance in the Gaussian case. It is easy to implement and efficient for counterfactual generation.
    \item We study the effect of temperature $\tau$ in the Gumbel-softmax distribution on our model and derive a tight bound between an introduced approximation. Our proposed method approximates the base model better as the temperature decreases.
    \item We experiment on four large-scale tabular datasets. The results demonstrate that our method achieves competitive performance on popular metrics used to evaluate counterfactuals within the field.
\end{enumerate}

\section{Related Works}
To situate our method within technological developments, we first review machine learning works related to counterfactual explanations and recent advances in controllable diffusion models, highlighting some of the key differences and shortcomings our method seeks to address for tabular data. 

\subsection{Counterfactual Explanations}
Following Equation \ref{eqn:cf_obj}, researchers have leveraged the auto-encoder architecture to construct a counterfactual explainer. \citet{joshi2019towards} takes a learned auto-encoder and aims to find the latent vector of the counterfactual sample by back-propagating gradients from the classifier into the latent space. The distance constraint is applied in the feature space. \citet{antoran2020getting} takes a similar approach but uses Bayesian Neural Networks to estimate the uncertainty of the generated counterfactual. \citet{pawelczyk2020learning} also works in the latent space but explicitly tackles a set of immutable features. The authors use the conditional HVAE \cite{nazabal2020handling} and condition on the immutable features while the searching of the counterfactual is again completed in the latent space with validity and minimum changes constraints in the feature space. These methods work in the latent space, and once the latent vector is found, the counterfactual sample is generated from the pretrained decoder.

To mitigate the searching task, \citet{guo2023counternet,guo2023rocoursenet, zhang2022interpretable} train the classifier and counterfactual generator simultaneously by supervising the latent space. The counterfactual samples can be generated by linear mapping\cite{zhang2022interpretable} and non-linear mapping \cite{guo2023counternet, guo2023rocoursenet} in the latent space, which effectively reduces the computational cost. Though efficient, such explainers are model-dependent, and the uncertainty of the decoder still exists. In contrast, our approach will be model-agnostic and directly operates in the feature space. 

On the other hand, \citet{wachter2017counterfactual} directly works in the feature space. The proposed method back-propagates the gradients that lead to the target class label with minimum changes in the feature space. Through this back-propagation, it becomes easier to handle immutable features, which simply mask the corresponding gradients. Though effective, it is still hard to handle categorical features in this setting. In addition, it has been shown that a single pixel can fool a well-trained classifier \cite{su2019one}. Thus, although the resulting counterfactual sample might be valid (i.e. changed its label), it may not provide meaningful human-actionable information, or insight into the learned deep neural network. 

Another method called FACE \cite{poyiadzi2020face} also works in the feature space. Here, a graph is first constructed based on the existing dataset. A graph search algorithm is then performed until it finds the counterfactual sample with the target label and minimum changes. If immutable features are present, a sub-graph is selected from the original graph. Though intuitive, the counterfactual samples are only selected from the existing dataset, which limits the diversity of the generated samples. Depending on the size of the dataset, the proposed method might also suffer from an instability issue. The computational cost is also high for this method.

\subsection{Guided Diffusion Models}
Diffusion models have demonstrated much generative power for image generation \cite{song2020denoising,ho2020denoising} and tabular data generation \cite{kotelnikov2023tabddpm}. However, counterfactual explanations through diffusion models are still rapidly developing. Guided diffusion models have been extensively studied \cite{dhariwal2021diffusion} and extended \cite{augustin2022diffusion} in this direction. In this paper, we employ these developments for tabular data, but challenges remain for extending to categorical features. Guided diffusion works because they operate in continuous spaces, where gradients can be calculated. However, this is infeasible in discrete spaces.  

Recently, \citet{gruver2024protein} have developed a controllable diffusion pipeline for protein generation, which is purely categorical data, using a continuous function mapping.  \citet{schiff2024simple} also worked on categorical language data, by directly treating the categorical vector as if it were a continuous vector. Both of these recent works have demonstrated their efficacy for their related tasks. While in this line of work, our paper is distinct in two ways: 1) We propose a new approach to handling categorical data in controllable diffusion models that requires minimal modification to the existing tabular diffusion frameworks, and 2) We handle both continuous and categorical data simultaneously with the aim of explainable classification, whereas these two papers do not involve a classification problem. 

\section{Background on Diffusion Models}
\subsection{Tabular diffusion models}
Diffusion models have been extensively studied recently as a powerful generative model for high fidelity images  \cite{sohl2015deep,ho2020denoising,nichol2021improved, song2020denoising}. A typical diffusion model consists of a forward and reverse Markov process. The forward process injects Gaussian noise to the input along a sequence of time step, terminating at a prior, typically isotropic Gaussian distribution. The Markovian assumption factorizes the forward process as $q(x_{1:T}|x_0) = \prod_{t=1}^T q(x_t | x_{t-1})$. The reverse process aims to gradually denoise from the prior $x_T\sim q(x_T)$ to generate a new sample through $p(x_{0:T}) = \prod_{t=1}^T p(x_{t-1}|x_t)$. Although the Gaussian forward process can be derived in closed form, the reverse process $p(x_{t-1}|x_t)$ is intractable and requires a neural network to approximate. The parameters of the denoising neural network can be learned by maximizing the evidence lower bound,
\begin{align}\label{eq.elbo}
   \log q(x_0) &\geq \mathbb{E}_{q(x_0)}\Big[{\underbrace{\log q(x_0|x_1)}_{\mathcal{L}_0}} - \underbrace{\mathrm{KL}(q(x_T|x_0)\|q(x_T))}_{\mathcal{L}_T} \nonumber\\
   &- \sum\limits_{t=2}^T \underbrace{\mathrm{KL}(q(x_{t-1}|x_t, x_0)\|q(x_{t-1}|x_t))}_{\mathcal{L}_t}\Big].
\end{align}
The key distinction of tabular diffusion models is that there are two independent processes: Gaussian diffusion models for continuous features and Multinomial diffusion models for categorical features \cite{kotelnikov2023tabddpm}.

\paragraph{Continuous diffusions.} Let $x_t\in \mathbb{R}^{D}$ and $\alpha_t = 1 - \beta_t$ where $t\in [1,T]$ is the time step. The forward process follows the distribution
\begin{equation}
    q(x_{t}) \sim \mathcal{N} (x_{t} | \sqrt{\alpha_t} x_{t-1},(1 - \beta_t I)).
\end{equation}
Given $x_0$, the marginal distribution of $x_t$ for any $t$ is $q(x_t | x_0) \sim \mathcal{N}(x_t | \sqrt{\overline{\alpha}_t}x_0,\sqrt{1 - \overline{\alpha}_t} I)$ where $\overline{\alpha}_t = \prod_{i=1}^t\alpha_i$. This allows direct generation of the noisy $x_t$. The reverse process approximates the true posterior $q(x_{t-1} | x_t, x_0)$ with $q_{\theta}(x_{t-1} | x_t)$. By Bayes' rule, $q(x_{t-1} | x_t, x_0)$ can be computed in closed form and is Gaussian. Therefore, $q_{\theta}(x_{t-1} | x_t)$ is usually chosen to be a neural network-parameterized Gaussian, $q_{\theta}(x_{t-1} | x_t) \sim \mathcal{N}(x_{t-1} | \mu_{\theta}(x_t, t), \Sigma_{\theta}(x_t, t))$. Alternatively, \citet{ho2020denoising} found that, instead of directly producing the mean of the posterior Gaussian distribution, more favorable results can be found by predicting the noise at each time step:
\begin{equation}
    \mathcal{L}_t = \mathbb{E}_{\epsilon\sim\mathcal{N}(0,I)} \|\epsilon_t - \epsilon_{\theta}(x_t, t))\|^2
\end{equation}
where $\epsilon_{\theta}$ is a neural network. Once trained, the mean of the posterior can be derived as
\begin{equation}
    \mu_{\theta}(x_t, t) = \frac{1}{\sqrt{1-\beta_t}}\left(x_t - \frac{\beta_t}{\sqrt{1 - \overline{\alpha}_t}}\epsilon_{\theta}(x_t, t)\right),
\end{equation}
which gradually denoises $x_t$. Furthermore, \citet{ho2020denoising} construct the generative process using stochastic Langevin dynamics which introduce randomness during the sampling
process. We use the same dynamics, except for the final step which produces actual samples.

\paragraph{Categorical diffusions.} Multinomial diffusion models adapt the framework to handle categorical data \cite{hoogeboom2021argmax}. Let $x_t$ be a $K$-dimensional one-hot vector. The forward process now becomes 
\begin{equation}
    q(x_t | x_{t-1}) \sim \mathrm{Cat}\big(x_t|(1-\beta_t)x_{t-1} + \beta_t/K\big).
\end{equation}
When $T$ is large enough, the resulting $x_T \sim \mathrm{Cat}(x_T|1/K)$. Similar to the continuous case, $x_t$ can be computed as $q(x_t|x_0) = \mathrm{Cat}(x_t|\overline{\alpha}_t x_0 + (1-\overline{\alpha}_t)/K)$. The posterior of the reverse process can be derived using Bayes' rule,
\begin{equation}
    q(x_{t-1}|x_t,x_0) \sim \mathrm{Cat}\Big(x_{t-1}|\pi/\textstyle\sum_{i=1}^K \pi_i\Big),
\end{equation}
where $\pi = [\alpha_t x_t + (1-\alpha_t)/K]\odot [\overline{\alpha}_{t-1}x_0 + (1-\overline{\alpha}_{t-1})/K]$. The loss $\mathcal{L}_t$ in the categorical case is the KL divergence $\mathrm{KL}(q(x_{t-1}|x_t,x_0) \| p_{\theta}(x_{t-1}|x_t))$ where the neural network outputs the predicted $\widetilde{x}_0$ directly from the noisy input $x_t$.

\subsection{Classifier guidance}
Controllable reverse processes have been explored to generate class-dependent samples \cite{nichol2021improved}. In classifier-free guidance, the target class label $y$ is embedded into the denoising neural network, generating class-dependent predicted noise. No classifier exists to be explained or generate counterfactuals for, and so these techniques are outside the scope of this paper.

In classifier guidance methods, a differentiable classifier $p_{\phi}(y|x)$ is trained on the input space and a guided reverse process is formulated as
\begin{equation}\label{eqn:rev_joint}
    p_{\theta,\phi}(x_t|x_{t+1},y) = \textstyle\frac{1}{Z}p_{\theta}(x_t|x_{t+1})p_{\phi}(y|f_{dn}(x_t)),
\end{equation}
where $f_{dn}$ reconstructs the noise-free sample. A first-order Taylor expansion around the mean $\mu$ gives the approximation
\begin{equation}\label{eqn:cls_guid_rev}
    \textstyle\frac{1}{Z}p_{\theta}(x_t|x_{t+1})p_{\phi}(y|x_t) \sim \mathcal{N}(\mu + \Sigma g, \Sigma),
\end{equation}
where $g = \nabla_{x_t}\log p_{\phi}(y | f_{dn}(x_t))|_{x_t=\mu}$. We see that the reverse process uses gradient information from the target class in the generative process. 

However, in the categorical setting a combinatorial challenge arises when calculating gradients, resulting in $\mathcal{O}(\prod_i K_i)$ forward passes from the classifier, where $K_i$ is the number of options for $i$-th categorical variable. This is infeasible when the number of categorical variables becomes large. This challenge motivates our following use of Gumbel-softmax reparameterization, resulting in a reverse process similar to that of the continuous case. 

\begin{figure}[t]
    \centering
    \includegraphics[width=0.5\linewidth]{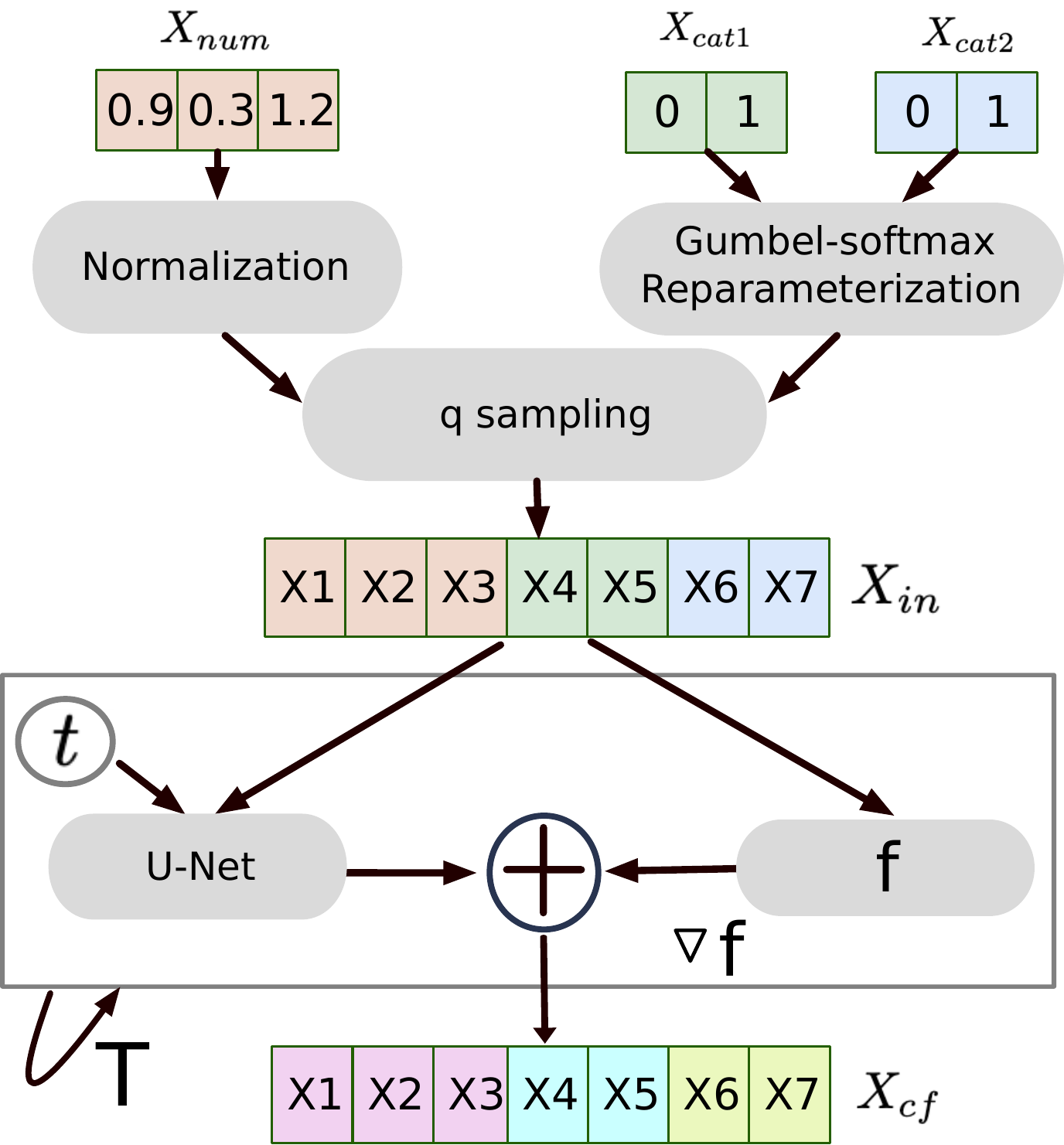}
    \caption{The pipeline of Tabular Diffusion Counterfactual Explanations (TDCE). The categorical variables in the one-hot vector are first re-parameterized. Then, the q sampling generates the noisy version of the input sample. The denoising module runs T steps with the gradient from the classifier to generate the counterfactual sample.  }
    \label{fig:pipeline}
\end{figure}

\section{Categorical Tabular Diffusions for Counterfactual Explanations}
We propose a novel method to generate counterfactual explanations for any differentiable classifier, with particular interest in the categorical data scenario. We adopt the Gumbel-softmax re-parameterization \cite{jang2016categorical} transform to provide a continuous representation of discrete data. This allows the model to leverage the gradients from the differentiable classifier on all the categorical variables and  produce counterfactual information. The pipeline of our method is shown in Figure \ref{fig:pipeline}.

\begin{figure*}[ht]
\centering
  \includegraphics[trim={0 0 0 0cm},clip,width=1\textwidth]{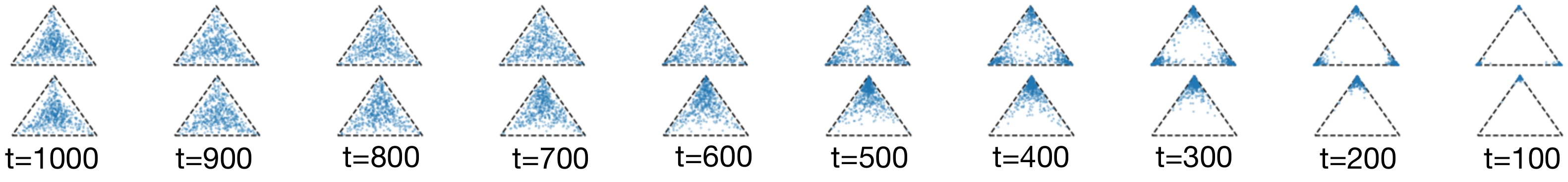}
  \caption{ A simulation of diffusions for the Gumbel-softmax vector over a single categorical variable with three classes. Each blue dot is a data point. Top: The reverse diffusion process for the Gumbel-softmax vector on a 3D simplex \textit{without} classifier guidance. Bottom: The reverse diffusion process \textit{with} classifier guidance. }
  \label{fig:cat-3-sim}
\end{figure*}

\subsection{Tabular counterfactual generation}
We break a data point $x$ into its continuous and categorical portions, $x^{\mathrm{num}}$ and $x^{\mathrm{cat}}$, respectively. 

\paragraph{Continuous features.} Here, we follow the adaptive parameterization \cite{augustin2022diffusion} to implement the guided reverse process. The mean transition of Equation \ref{eqn:cls_guid_rev} now becomes
\begin{equation}\label{eqn:cont_guided}
    \mu_{\theta}(x_t^{\mathrm{num}},t) + \Sigma_\theta(x_t^{\mathrm{num}},t)\|\mu_{\theta}(x_t^{\mathrm{num}},t)\| g_{\mathrm{guided}}
\end{equation}
$$    g_{\mathrm{guided}} = \frac{\nabla \log p_{\phi}(y|f_{dn}(x_t))}{\|\nabla \log p_{\phi}(y|f_{dn}(x_t))\|} - \frac{\nabla d(x, f_{dn}(x_t))}{\|\nabla d(x, f_{dn}(x_t))\|}$$
is the normalized gradient from the classifier and the normalized distance constraint. 

Intuitively, each original denoising step is guided by the classifier's gradients multiplied by the covariance of the denoising step and the magnitude of the unguided mean vector. This process takes the classifier's impact into account and generates high-quality data not only around a dataset's manifold but also in the cluster of the target class. The proposed counterfactual changes should be minimal compared with the initial sample. 

\paragraph{Categorical features.} Working with Equation (\ref{eqn:rev_joint}),
\begin{align}\label{eqn:cat_guid_rev}
    \log p_{\theta,\phi}(x_t|x_{t+1},y) & =  \log p_{\theta}(x_t|x_{t+1}) + \log p_{\phi}(y|f_{dn}(x_t)) - \log Z,
\end{align}
we observe that the adaptive parameterization approach cannot be straightforwardly applied because the gradient cannot be back-propagated to the discrete one-hot vector space. To guide the reverse process in discrete data scenarios, all combinations must be exhausted, which is infeasible and motivated recent developments \cite{schiff2024simple}. 

In this paper, we approach this problem through the Gumbel-softmax re-parameterization; instead of working in the discrete space, we propose to use the Gumbel-softmax vector to softly approximate the discrete data. 

\subsection{Relaxation of categorical variables}
At each time step, a categorical variable is modeled as $x^{\mathrm{cat}} \sim \mathrm{Cat}(x^{\mathrm{cat}} | \overline{\pi})$ where $\overline{\pi} \in \Delta^{K-1}$ is a normalized nonnegative vector. A one-hot vector can be constructed as $x^{\mathrm{cat}} = \mathrm{onehot}(\argmax_i g_i + \log\pi_i )$, where $g_i\sim \mathrm{Gumbel}(0,1)$. Following \citet{jang2016categorical}, and re-parameterize this as
\begin{equation}\label{eqn:re-param}
    \widetilde{x}_{i,t}^{\mathrm{cat}} = \frac{\exp(\frac{1}{\tau}(g_i+\log\overline{\pi}_{i,t} ))}{\sum_{j=1}^K\exp(\frac{1}{\tau}(g_j+\log\overline{\pi}_{j,t} ))}
\end{equation}
at each time step of the reverse process, where $\tau\geq0$ is the temperature. As is evident, as $\tau \rightarrow 0$, $\widetilde{x}_t^{\mathrm{cat}}$ reduces to a one-hot vector. Using this continuous transformation, the $\log p_{\theta}(x_t|x_{t+1})$ term in Equation \ref{eqn:cat_guid_rev} can be modeled with a Gumbel-softmax vector. The density of Gumbel-softmax (GS) \cite{jang2016categorical,maddison2016concrete} is
\begin{equation}\label{eqn:gumbel_softmax}
    p_{GS}(\widetilde{x}_{1:K}|\overline{\pi},\tau) = \Gamma(K)\tau^{K-1}\bigg(\sum_{i=1}^K\frac{\overline{\pi}_i}{\widetilde{x}_i^{\tau}}\bigg)^{-K}\prod_{i=1}^K \frac{\overline{\pi}_i}{\widetilde{x}_i^{\tau+1}}.
\end{equation}
Using Equation \ref{eqn:re-param}, we switch from the discrete one-hot representation to the continuous softmax representation. In the forward and backward process, the transitions are
\begin{align}
    q(\widetilde{x}_t | \widetilde{x}_{t-1}) &\sim \mathrm{GS}\big(\widetilde{x}_t |\overline{\pi}=(1-\beta_t)\widetilde{x}_{t-1} + \beta_t/K\big), \nonumber\\
    q(\widetilde{x}_{t-1}|\widetilde{x}_t,\widetilde{x}_0) &\sim \textstyle \mathrm{GS}\big(\widetilde{x}_{t-1}|\overline{\pi}=\widetilde{\pi}/\sum_{i=1}^K\widetilde{\pi}_i \big),
\end{align}
where $\widetilde{\pi} = [\alpha_t \widetilde{x}_t + (1-\alpha_t)/K]\odot [\overline{\alpha}_{t-1}\widetilde{x}_0 + (1-\overline{\alpha}_{t-1})/K]$. The final categorical sample can be obtained by $x^{\mathrm{cat}} = \mathrm{onehot}(\argmax_i \widetilde{x})$. Equation \ref{eqn:cat_guid_rev} in the Gumbel-softmax space reflects this change straightforwardly,
\begin{align}\label{eqn:rev_gum}
    \log p_{\theta,\phi}(\widetilde{x}_t|\widetilde{x}_{t+1},y)& =  \log p_{\theta}(\widetilde{x}_t|\widetilde{x}_{t+1}) + \log p_{\phi}(y|f_{dn}(\widetilde{x}_t)) + const.
\end{align}
The reverse process $p_{\theta}(\widetilde{x}_t|\widetilde{x}_{t+1})$ is a parameterized neural network. A challenge arises while solving the guided process with the Gumbel-softmax distribution not faced by Gaussian diffusions because the Gaussian model mathematically accommodates a first order Taylor approximation of the classifier well. Therefore, for the Gumbel-softmax we approximate the log density $\log p_{\theta}(\widetilde{x}_t|\widetilde{x}_{t+1})$ as
\begin{equation}\label{eqn:cat_approx}
    \log p_{\theta}(\widetilde{x}_t|\widetilde{x}_{t+1})  ~\approx~ \widetilde{x}_{t}^\top\log\overline{\pi}_{\theta}(\widetilde{x}_{t+1}) +const
\end{equation}
We evaluate this approximation in Section \ref{sec.closeness} in terms of a KL divergence bound between the Gumbel-softmax distribution and Equation \ref{eqn:cat_approx}.

Next, we take the first order of Taylor expansion for the classifier around $\widetilde{x}_{t+1}$, leading to
\begin{equation}\label{eqn:cls_taylor}
    \log p_{\phi}(y | \widetilde{x}_t) \approx (\widetilde{x}_t - \widetilde{x}_{t+1})^\top g_{cat}+ const
\end{equation}
where $g_{cat} = \nabla \log p_{\phi}(y | \widetilde{x}_t)|_{\widetilde{x}_t=\widetilde{x}_{t+1}}$. Replacing Equations \ref{eqn:cat_approx} \& \ref{eqn:cls_taylor} in Equation \ref{eqn:rev_gum}, the guided reverse process becomes
\begin{equation}
    \log p_{\theta,\phi}(\widetilde{x}_t|\widetilde{x}_{t+1},y) \approx \widetilde{x}_t^\top (\log\overline{\pi}_{\theta}(\widetilde{x}_{t+1}) + \lambda g_{cat}) + const,
\end{equation}
where $\lambda$ is a regularization hyperparameter. The familiar expression that results has a similar interpretation to the continuous case. 
We illustrate the reverse process dynamics of our approach in Figure \ref{fig:cat-3-sim}. 

\subsection{Closeness of the approximation}\label{sec.closeness}
At each reverse time step, the log density $\log p_{\theta}(\widetilde{x}_t|\widetilde{x}_{t+1})$ follows the Gumbel-softmax distribution $p_{GS}(\widetilde{x}_t|\widetilde{x}_{t+1})$. We model the log density as
\begin{equation}\label{eqn:approx_gs}
    p_{\theta}(\widetilde{x}_t|\widetilde{x}_{t+1}) = \frac{1}{Z(\widetilde{x}_{t+1})}\prod_i^K \overline{\pi}_{\theta}(\widetilde{x}_{t+1})_i^{\widetilde{x}_{t,i}}
\end{equation}
where $Z(\widetilde{x}_{t+1})$ is the normalizing constant and $\overline{\pi}_{\theta}(\cdot)$ is the probability estimator parameterized by a diffusion network.

\begin{theorem}\label{Thm:thm_kl}
Let $\widetilde{x}, \pi\in \Delta^{K-1}$ and the temperature $\tau\in \mathbb{R}^+$. Define $\widetilde{x}_{min}$ the minimum value $\widetilde{x}$ can take. The KL divergence between $p_{GS}$ defined in Equation \ref{eqn:gumbel_softmax} and its approximation $p_{\theta}$ in Equation \ref{eqn:approx_gs} is bounded as follows:
\begin{align}
    \mathrm{KL}(p_{GS}\|p_{\theta}) < &  -K(\tau+1)\log [1-\widetilde{x}_{min}] + (K-1)\log\tau \nonumber \\
    & + (K-1)\log[1-\widetilde{x}_{min}]  \nonumber \\
    &+ \log\Gamma(K) + K\log[(1-\widetilde{x}_{min})/(K-1)!]\nonumber\\
    \mathrm{KL}(p_{GS}\|p_{\theta}) > & ~  K(\tau+1)\log\widetilde{x}_{min} + (K-1)\log\tau\nonumber\\
    & + (K-1)\log[\widetilde{x}_{min}] \nonumber \\
    &+ \log\Gamma(K) + K\log[1/(K-1)!] \nonumber
\end{align}
\end{theorem}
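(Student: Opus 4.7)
The plan is to bound $\mathrm{KL}(p_{GS}\|p_\theta) = \mathbb{E}_{p_{GS}}[\log p_{GS}(\widetilde{x}) - \log p_\theta(\widetilde{x})]$ by controlling the integrand $\log(p_{GS}/p_\theta)$ uniformly on the support, using only the coordinate restriction $\widetilde{x}_i \in [\widetilde{x}_{min},\, 1-\widetilde{x}_{min}]$ that follows from $\widetilde{x}\in\Delta^{K-1}$ combined with the minimum-value assumption. Because any uniform pointwise bound on the log-ratio transfers to an identical bound on the expectation (since $p_{GS}$ integrates to one), this converts a calculus-of-variations problem into a term-by-term algebraic one.

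First I would substitute the Gumbel-softmax density from Equation \ref{eqn:gumbel_softmax} and the proposed approximation in Equation \ref{eqn:approx_gs} (taking $\overline{\pi}_\theta(\widetilde{x}_{t+1}) = \overline{\pi}$, the optimal match) to obtain the decomposition $\log p_{GS}(\widetilde{x}) - \log p_\theta(\widetilde{x}) = \log\Gamma(K) + (K-1)\log\tau - K\log\sum_i \overline{\pi}_i/\widetilde{x}_i^\tau - (\tau+1)\sum_i\log\widetilde{x}_i + \sum_i(1-\widetilde{x}_i)\log\overline{\pi}_i + \log Z(\widetilde{x}_{t+1})$. I expect the $\log\Gamma(K)+(K-1)\log\tau$ constant to appear untouched in both bounds, the $(\tau+1)\sum_i\log\widetilde{x}_i$ piece to contribute the dominant $K(\tau+1)$ coefficients multiplying $\log(1-\widetilde{x}_{min})$ in the upper bound and $\log\widetilde{x}_{min}$ in the lower bound, and the remaining three pieces to supply the $(K-1)\log(1-\widetilde{x}_{min})$, $K\log(1-\widetilde{x}_{min})$, and $-K\log(K-1)!$ corrections.

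Next I would handle each piece using only simplex-level facts. For $-(\tau+1)\sum_i\log\widetilde{x}_i$, monotonicity of $\log$ combined with the coordinate restriction gives immediate two-sided bounds. For the log-sum term $-K\log\sum_i \overline{\pi}_i/\widetilde{x}_i^\tau$, I would observe that $\sum_i \overline{\pi}_i/\widetilde{x}_i^\tau$ is a convex combination of the $1/\widetilde{x}_i^\tau$ and so lies in $[1/(1-\widetilde{x}_{min})^\tau,\, 1/\widetilde{x}_{min}^\tau]$ independently of $\overline{\pi}$. For $\sum_i(1-\widetilde{x}_i)\log\overline{\pi}_i$, the factor $(1-\widetilde{x}_i)$ is non-negative and $\log\overline{\pi}_i \leq 0$, giving an upper bound of $0$; a matching lower bound follows from $\log\overline{\pi}_i \geq \log\widetilde{x}_{min}$ under the analogous restriction on $\overline{\pi}$. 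For $\log Z$, I would dominate the integrand by noting $\prod_i \overline{\pi}_i^{\widetilde{x}_i} \leq 1$ on $\Delta^{K-1}$, so $Z \leq \mathrm{Vol}(\Delta^{K-1}) = 1/(K-1)!$ and $\log Z \leq -\log(K-1)!$; a complementary lower bound $\log Z \geq \sum_i \log\overline{\pi}_i - \log(K-1)!$ follows from $\overline{\pi}_i^{\widetilde{x}_i} \geq \overline{\pi}_i$ whenever $\widetilde{x}_i \leq 1$.

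The main obstacle will be the log-sum term and the normalizer $\log Z$, because both couple $\overline{\pi}$ and $\widetilde{x}$ non-linearly and the target bounds must be free of $\overline{\pi}$; my strategy is to replace each with dominating quantities that depend only on $\widetilde{x}_{min}$, $K$, and $\tau$, which are tight exactly when $\overline{\pi}$ concentrates on a single coordinate. Once each of the five pieces has been bounded, the remaining work is bookkeeping: collecting all contributions proportional to $\log(1-\widetilde{x}_{min})$, $\log\widetilde{x}_{min}$, $\log\tau$, $\log\Gamma(K)$, and $\log(K-1)!$ and using $\log\Gamma(K) = \log(K-1)!$ to cancel should reproduce the stated coefficients.
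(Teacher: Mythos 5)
Your overall strategy is the same as the paper's: bound the log-ratio $\log p_{GS}-\log p_\theta$ pointwise, term by term, using the coordinate restriction $\widetilde{x}_i\in[\widetilde{x}_{min},1-\widetilde{x}_{min}]$ and the simplex volume $1/(K-1)!$, then push the bound through the expectation. Your one genuine departure is the treatment of $\sum_i \overline{\pi}_i/\widetilde{x}_i^{\tau}$: the paper devotes a lemma to minimizing this over the simplex via Lagrange multipliers and a generalized H\"older argument, whereas you observe directly that it is a convex combination of the $1/\widetilde{x}_i^{\tau}$ and hence lies in $[1/(1-\widetilde{x}_{min})^{\tau},\,1/\widetilde{x}_{min}^{\tau}]$. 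That substitution is correct, more elementary, and sufficient for the purpose the lemma serves.

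There are, however, two concrete problems. First, the dominant term: you claim that monotonicity of $\log$ plus the coordinate restriction gives $-(\tau+1)\sum_i\log\widetilde{x}_i \leq -K(\tau+1)\log(1-\widetilde{x}_{min})$ for the upper bound. The inequality runs the other way: since $\widetilde{x}_i\leq 1-\widetilde{x}_{min}$, each $-\log\widetilde{x}_i\geq-\log(1-\widetilde{x}_{min})$, so $-K(\tau+1)\log(1-\widetilde{x}_{min})$ is a \emph{lower} bound on this term; the uniform \emph{upper} bound the restriction actually yields is $-K(\tau+1)\log\widetilde{x}_{min}$. No pointwise argument can rescue the version you (and the theorem as stated in the main text) are targeting, because the integrand genuinely exceeds it; note that the appendix restatement of the theorem carries $-K(\tau+1)\log\widetilde{x}_{min}$ here, which is the form your machinery can deliver. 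Second, two of your auxiliary bounds are too loose to reproduce the stated constants: bounding $\sum_i(1-\widetilde{x}_i)\log\overline{\pi}_i$ by $0$ misses the $(K-1)\log(1-\widetilde{x}_{min})$ term, and bounding $Z$ by $\mathrm{Vol}(\Delta^{K-1})=1/(K-1)!$ misses the factor $1-\widetilde{x}_{min}$ inside $K\log[(1-\widetilde{x}_{min})/(K-1)!]$. Both are recovered by using the restriction on $\overline{\pi}$ itself, as the paper does: $\overline{\pi}_i\leq 1-\widetilde{x}_{min}$ gives $\sum_i(1-\widetilde{x}_i)\log\overline{\pi}_i\leq(K-1)\log(1-\widetilde{x}_{min})$ (using $\sum_i(1-\widetilde{x}_i)=K-1$), and $\prod_i\overline{\pi}_i^{\widetilde{x}_i}\leq\max_j\overline{\pi}_j\leq 1-\widetilde{x}_{min}$ gives $Z\leq(1-\widetilde{x}_{min})/(K-1)!$. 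Your bounds still yield a valid, but strictly weaker, upper bound than the one claimed.
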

Proof: See the supplementary material. \newline

\begin{figure}[t]
    \centering
    \includegraphics[width=0.7\linewidth]{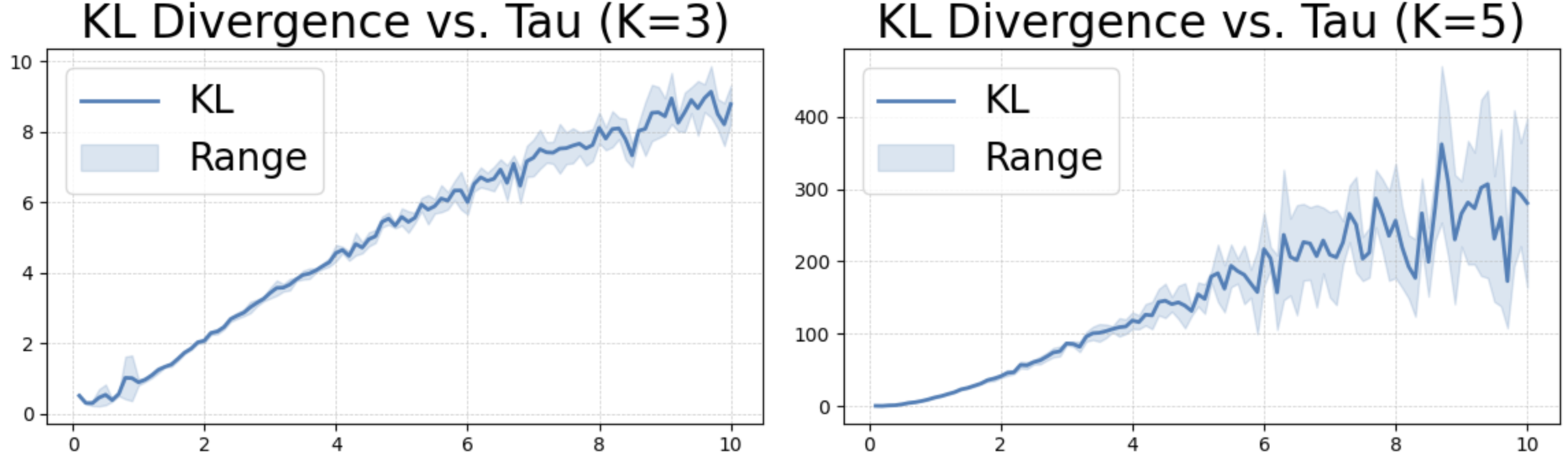}
    \caption{KL divergence between the Gumbel-softmax distribution and our approximation on simulated data as a function of temperature $\tau$. The KL divergence increases as the $\tau$ grows as do the bounds.}
    \label{fig:kl_bound}
\end{figure}

An empirical example of the bound is shown in Figure \ref{fig:kl_bound}. The benefits of our proposed approximation are: \vspace{-0pt}
\begin{enumerate} 
    \item[1)] It allows us to use the first order Taylor expansion, resulting in a closed-form update at each time step for the reverse process. This update directs the unguided logits with the gradient of classifier towards the target class, which is intuitive and similar to the Gaussian case.
    \item[2)] The Taylor expansion, which is also applied to the Gaussian case, requires no additional step for guided categorical variables. The gradient can be calculated concurrently with the Gaussian case over any continuous variables in the data, which significantly reduces computational complexity. 
\end{enumerate}

Although a lower temperature leads to a better approximation, it also introduces a larger variance and may result in vanishing gradient issues. As $\tau\rightarrow 0$, the soft representation approaches a one-hot vector, which may prevent backwards flow of the gradients through the softmax function. A lower temperature can also introduce significant variance of the estimated gradients. To see this, let $Y_i = \mathrm{softmax}((z_i+g_i)/\tau)$ where $g_i$ is the Gumbel noise. The partial derivative $\partial Y_i/\partial z_j = \frac{1}{\tau}Y_i(\delta_{ij}-Y_j)$, which is bounded above by $\frac{1}{4\tau}$, and so $\mathrm{Var}(\partial Y_i/\partial z_j) \leq \mathbb{E}[(\partial Y_i/\partial z_j)^2] < \frac{1}{16\tau^2}$. For the lower bound, the integral over the Gumbel noise is required, which is complicated. 
However, we know that $\partial Y_i/\partial z_j \geq \frac{B}{\tau}$ for some constant $B$, meaning both bounds of the variance indicate that it becomes larger as the temperature decreases. In implementation, we therefore start with a warmer temperature and gradually decrease to a smaller value away from zero.

\begin{algorithm}[t!]
\caption{Tabular diffusion counterfactual explanations}\label{alg:tabular_diff_ce}
\begin{algorithmic}[1]
\Require Input $x$ of continuous $x^{\mathrm{num}}$ and categorical features $x^{cat}$. Binary mask $m$ of immutable features in $x$. Classifier $f$ and denoiser $f_{dn}$
\State $x_t^{\mathrm{num}}\sim  \mathcal{N}(x_t | \sqrt{\overline{\alpha}_t}x_0,\sqrt{1 - \overline{\alpha}_t} I)$ 
\Statex $\widetilde{x}_t^{\mathrm{cat}}\sim \mathrm{GS}(\widetilde{x}_t|\overline{\alpha}_t\widetilde{x}_0 + (1-\overline{\alpha}_t)/K)$
\For{iteration $t = T,\dots,0$}
\State $g^{\mathrm{num}} = \frac{\nabla \log p_{\phi}(y|f_{dn}(x_t))}{\|\nabla \log p_{\phi}(y|f_{dn}(x_t))\|}$
\State $g^{\mathrm{cat}}~~ = \nabla \log p_{\phi}(y |\widetilde{x}_t)|_{\widetilde{x}_t=\widetilde{x}_{t+1}}$
\State $\mu, \Sigma \,\,\leftarrow \mu_{\theta}([x_t^{\mathrm{num}},\widetilde{x}_t^{\mathrm{cat}}]), \Sigma_{\theta}([x_t^{\mathrm{num}},\widetilde{x}_t^{\mathrm{cat}}])$
\State $x^{\mathrm{num}}\leftarrow \mu^{\mathrm{num}} + \|\mu^{\mathrm{num}}\|\Sigma^{\mathrm{num}} g^{\mathrm{num}}$
\State $\widetilde{x}^{\mathrm{cat}} ~\,\leftarrow \mu^{\mathrm{cat}} + \|\mu^{\mathrm{cat}}\| g^{\mathrm{cat}}$
\State $x_{t-1}\, \leftarrow [x^{\mathrm{num}},\widetilde{x}^{\mathrm{cat}}]\odot m + [x_t^{\mathrm{num}},\widetilde{x}_t^{\mathrm{cat}}]\odot (1-m)$
\EndFor
\end{algorithmic}
\end{algorithm}

\subsection{Immutable features}
Immutable features are those that are predefined as unchangeable by the source of a datum, e.g., a location. When generating counterfactuals, these cannot be changed. One simple approach is to define a binary mask $m$ indicating which features can change, and produce the counterfactual $x_t*m + x*(1-m)$. The main issue here is with so-called coherence \cite{avrahami2022blended}, yielding samples that fall outside the data manifold.

Motivated by the blended diffusions of vision tasks \cite{avrahami2022blended}, we combine the noisy version of the immutable features from the input with the guided mutable features according to $x_{t,\mathrm{guided}}*m + x_{t,\mathrm{noisy}}*(1-m)$ where $x_{t,\mathrm{noisy}}$ is obtained from the forward process. At the final step, the immutable features are replaced by the original input. Our algorithm is shown in Algorithm \ref{alg:tabular_diff_ce}.

\section{Experiments}
We compare our method with other popular methods for generating counterfactual explanations. The classifier $f$ to be explained shares the same architecture as the U-net in the diffusion model and the last layer outputs two dimensional logits for binary classification. We refer to our method as Tabular Diffusion Counterfactual Explanation (TDCE).

\subsection{Datasets}
We focus on tabular datasets, selecting popular and public datasets that consists of both numerical and categorical features. A description of data is shown in Table \ref{table:datasets}. Lending Club Dataset (LCD) and Give Me Some Credit (GMC) focus on credit lending decisions. ``Adult'' predicts binarized annual income based on a set of features. The LAW data predicts pass/fail on a law school test. In all the experiments, we perform standardization for each continuous feature and convert each categorical feature to one-hot vector.

\begin{table}[t]
\centering 
\resizebox{0.6\columnwidth}{!}{
\begin{tabular}{ c c c c c c }
 \hline
 \textbf{Dataset} & \textbf{$\#$Train} & \textbf{$\#$Val} & \textbf{$\#$Test} & \textbf{$\#$Num}& \textbf{$\#$Cat}\\ 
 \hline
  LCD             & 10,000 & 1,000 & 1,000 & 5 & 1\\
  GMC             & 15,000 & 1,000 & 1,000 & 9  & 1\\
  Adult           & 47,842 & 1,000 &1,000  & 9 & 2\\
  LAW             & 5,502 & 1,000 & 1,000 & 8 & 3\\
 \hline
\end{tabular}
}
\caption{Statistics from the tabular data sets we use.}\label{table:datasets}
\end{table} 

\subsection{Baselines and evaluation metrics}
As a baseline, we compare with five methods for generating counterfactual explainations of a binary classifier. \citet{wachter2017counterfactual} present the most straightforward baseline. They generate counterfactuals by following the gradients of a classifier from the input $x$ to the decision boundary. However, as we show in the next section, this simple and intuitive approach struggles to generate realistic counterfactuals. We also benchmark against VAE-based methods designed to fix this, including CCHVAE \cite{pawelczyk2020learning}, REVISE \cite{joshi2019towards}, and CLUE \cite{antoran2020getting}, as well as a method based on graph search called FACE \cite{poyiadzi2020face} and the neural network based method CounterNet \cite{guo2023counternet}. We implement these benchmarks using the CARLA library \cite{pawelczyk2021carla} with their default parameterizations.


We evaluate these methods using several widely used metrics for counterfactual-based explainability: Interpretability, Diversity, Validity, Instability and the JS divergence. Among these, Diversity and Instability are restricted to continuous features, while JS divergence applies only to categorical features. There is no global metric that quantifies counterfactual performance, and so the set of metrics described below combines to paint a subjective picture for evaluation.

\paragraph{\textbf{L2 Distance.}} Counterfactual samples aim to have turned their label to the target class with the minimum changes in the feature space. This is the key standard of counterfactual generation described in Equation \ref{eqn:cf_obj} and can be quantified using L2 distance,
\begin{equation}
    \text{L2} = \frac{1}{N}\sum_{i=1}^N ||x_i - x_i^{\mathrm{cf}}||_2^2,
\end{equation}
Please note that this metric can only evaluate continuous features. For categorical features, we aim to recover the distribution of the categorical variable in the target class, which will be described later.

\paragraph{\textbf{Interpretability.}} \citet{van2021interpretable} use an autoencoder to evaluate the \textit{interpretability} of a counterfactual method. Let $\mathrm{AE}_o$, $\mathrm{AE}_t$, and $\mathrm{AE}$ be three autoencoders trained on the original class, target class, and the entire dataset, respectively. The IM1 and IM2 scores are
\begin{align}
    \mathrm{IM1} &=  \frac{1}{N}\sum_{i=1}^N\frac{\|x_i^{\mathrm{cf}} - \mathrm{AE}_t(x_i^{\mathrm{cf}})\|^2}{\|x_i^{\mathrm{cf}} - \mathrm{AE}_o(x_i^{\mathrm{cf}})\|^2 + \epsilon} \nonumber \\
    \mathrm{IM2} & =   \frac{1}{N}\sum_{i=1}^N\frac{\|\mathrm{AE}_t(x_i^{\mathrm{cf}}) - \mathrm{AE}(x_i^{\mathrm{cf}})\|^2}{\|x_i^{\mathrm{cf}}\|_1 + \epsilon}
\end{align}
where $x_i^{\mathrm{cf}}$ is the $i$th of $N$ counterfactuals. Lower value of IM1 indicates that the generated counterfactuals are reconstructed better by the autoencoder trained on the counterfactual class ($\mathrm{AE}_t$) than the autoencoder trained on the original class. This suggests that the counterfactual is closer to the data manifold of the counterfactual class, and thus more plausible. A similarly interpretation holds for IM2. Hence, lower values of IM1 and IM2 are preferred.

\paragraph{\textbf{Diversity.}} Diversity provides additional performance information because low IM1 and IM2 may occur with counterfactuals that tend to merge to a single point; not only should the counterfactual look like the counterclass, it should also preserve its variety. The diversity metric is calculated as
\begin{equation}
    \text{Diversity} =  \frac{1}{N(N-1)}\sum_{i=1}^N\sum_{j=i+1}^N d(x_i^{\mathrm{cf}},x_j^{\mathrm{cf}}),
\end{equation}
where $d(\cdot, \cdot)$ is a predefined distance function. We use the Euclidean distance in this paper. 

\paragraph{\textbf{Validity.}} This metric verifies that the generated counterfactual indeed lies in the counter-class region of the classifier to be explained. This is
\begin{equation}
    \text{Validity} =  \frac{1}{N}\sum_{i=1}^N \mathbb{1}(f(x_i^{\mathrm{cf}}) = y')
\end{equation}
where $f(\cdot)$ is the explained classifier and $y'$ is the target label. (Not all counterfactual methods generate counterfactuals that are guaranteed to change their label.)

\paragraph{\textbf{Instability.}} A stable counterfactual explainer should produce similar counterfactual outputs for two similar query inputs. Instability quantifies this as
\[
\text{Instability} =  \frac{1}{N} \sum_{i=1}^N \frac{d(x_i^{\mathrm{cf}},\widehat{x}_i^{\mathrm{cf}})}{1 + d(x_i,\widehat{x}_i)}
\]
where $\widehat{x}_i = \arg\min_{x\in X\setminus x_i, f(x) = f(x_i)} \|x-x_i\|$, the point within the data set closest to $x_i$ that has the same label. A low instability is preferred.

\paragraph{\textbf{JS Divergence.}} We also evaluate how well the distribution of counterfactual categorical variables aligns with the distribution of the target class. We calculate the average JS divergence across categorical variables,
\begin{equation}
    \text{JS} =  \frac{1}{N^c}\sum_{i=1}^{N^c} \mathrm{JS}(P_{\text{target}}(x_i) \| P_{CF}(x_i)) 
\end{equation}
where $N^c$ is the number of categorical variables. A lower JS score indicates similarity between the distributions of generated counterfactuals and the target class.

\subsection{Results}
\begin{table*}[t!]
    \centering
    \resizebox{0.95\textwidth}{!}{
    \begin{tabular}{c c|c|c|c|c|c|c|c }
        \hline\hline
        \multicolumn{9}{c}{\textbf{Counterfactual Evaluations}}\\
        \hline
        \multicolumn{2}{c|}{\textbf{Model}} & L2$\downarrow$& Diversity$\uparrow$  & Instability$\downarrow$  & JS$\downarrow$  & IM1$\downarrow$& IM2$\downarrow$  & Validity$\uparrow$ \\
        \hline
        \multirow{7}{*}{\rotatebox[origin=c]{90}{\textbf{LCD}}}
        & Wach. &$\textbf{0.34} {\pm 0.02}$& $\textbf{0.73} {\pm 0.03}$ & $ \underline{0.11} {\pm 0.03}$ & $0.12 {\pm 0.03}$& $1.33 {\pm 0.04}$ & $0.16 {\pm 0.03}$& $0.60 {\pm 0.03}$ \\
        & CCH. & $0.56 {\pm 0.03}$& $0.19 {\pm 0.01}$ & $0.21 {\pm 0.02}$ & $0.09 {\pm 0.01}$ & $\textbf{0.57} {\pm 0.01}$ & $\underline{0.08} {\pm 0.01}$ & $\textbf{0.99} {\pm 0.01}$ \\
        & REVI. &$0.59 {\pm 0.01}$& $0.18 {\pm 0.03}$ & $0.22 {\pm 0.02}$ & $0.10 {\pm 0.01}$ & $0.89 {\pm 0.03}$ &$0.09 {\pm 0.02}$ & $\textbf{0.99} {\pm 0.01}$\\
        & CLUE   &$0.70 {\pm 0.02}$ & $0.26 {\pm 0.03}$ & $0.31 {\pm 0.03}$& $0.11 {\pm 0.01}$& $0.72 {\pm 0.04}$& $0.11 {\pm 0.01}$ & $0.83 {\pm 0.03}$ \\
        & FACE   &$0.69 {\pm 0.01}$& $\underline{0.54} {\pm 0.05}$ & $\underline{0.11} {\pm 0.01}$ & $\underline{0.06} {\pm 0.01}$ & $0.91 {\pm 0.07}$& $0.11 {\pm 0.03}$ & $0.85 {\pm 0.02}$\\
        & CounterNet & $\underline{0.35} {\pm 0.01}$ & $0.45 {\pm 0.03}$ & $0.25 {\pm 0.02}$ & $0.15 {\pm 0.02}$ & $0.99 {\pm 0.03}$ & $0.69 {\pm 0.03}$ & $\textbf{0.99} {\pm 0.01}$ \\
        & {TDCE}  &$0.59 {\pm 0.03}$&$\textbf{0.73} {\pm 0.03}$ & $ \textbf{0.05} {\pm 0.01}$ & $\textbf{0.01} {\pm 0.01}$ & $\underline{0.63} {\pm 0.03}$ & $\textbf{0.05} {\pm 0.01}$ & $\textbf{0.99} {\pm 0.01}$ \\
        \hline
        \multirow{7}{*}{\rotatebox[origin=c]{90}{\textbf{GMC}}}
        & Wach. &$\textbf{0.03} {\pm 0.02}$& $\underline{0.25} {\pm 0.02}$ & $0.09 {\pm 0.01}$ & $\textbf{0.03} {\pm 0.01}$ & $\underline{1.04} {\pm 0.05}$ & $\textbf{0.07} {\pm 0.01}$ & $0.73 {\pm 0.03}$ \\
        & CCH.  &$0.21 {\pm 0.03}$& $0.21 {\pm 0.01}$ & $0.10 {\pm 0.01}$ & $0.06 {\pm 0.02}$ & $1.14 {\pm 0.05}$ & $0.15 {\pm 0.02}$ & $0.77 {\pm 0.02}$\\
        & REVI.  &$0.23 {\pm 0.02}$ & $0.21 {\pm 0.02}$ & $0.13 {\pm 0.01}$ & $\underline{0.05} {\pm 0.01}$& $1.18 {\pm 0.05}$ & $\textbf{0.07} {\pm 0.01}$ & $0.80 {\pm 0.02}$ \\
        & CLUE  & $\underline{0.18} {\pm 0.02}$& $0.18 {\pm 0.02}$ & $\underline{0.07} {\pm 0.01}$ & $0.08 {\pm 0.01}$ & $1.14 {\pm 0.04}$& $\textbf{0.07} {\pm 0.01}$ & $0.81 {\pm 0.02}$ \\
        & FACE  & $0.21 {\pm 0.02}$& $0.17 {\pm 0.02}$ & $\textbf{0.05} {\pm 0.01}$ & $0.07 {\pm 0.01}$ & $1.18 {\pm 0.01}$&$0.08 {\pm 0.01}$ & $0.86 {\pm 0.01}$ \\
        & CounterNet & $0.20 {\pm 0.01}$ & $0.17 {\pm 0.02}$ & $0.10 {\pm 0.01}$ & $0.06 {\pm 0.01}$ & $1.02 {\pm 0.02}$ & $0.11 {\pm 0.02}$ & $\underline{0.97} {\pm 0.01}$ \\
        & TDCE & $\underline{0.18} {\pm 0.03}$ & $\textbf{1.08} {\pm 0.06}$& $\textbf{0.05} {\pm 0.01}$ & $\textbf{0.03} {\pm 0.01}$  & $\textbf{0.96} {\pm 0.04}$ & $\textbf{0.06} {\pm 0.02}$ & $\textbf{0.99} {\pm 0.01}$ \\
        \hline
        \multirow{7}{*}{\rotatebox[origin=c]{90}{\textbf{Adult}}}
        & Wach.& $\textbf{0.27} {\pm 0.03}$ & $\textbf{1.11} {\pm 0.01}$ & $0.09 {\pm 0.01}$ & $0.13 {\pm 0.01}$ & $1.31 {\pm 0.03}$ &$\textbf{0.05} {\pm 0.01}$ & $0.57 {\pm 0.02}$ \\ 
        & CCH. & $\underline{0.79} {\pm 0.03}$& $0.19 {\pm 0.02}$ & $0.22 {\pm 0.02}$ & $0.11 {\pm 0.02}$ & $1.89 {\pm 0.07}$ &$0.06 {\pm 0.02}$ & $0.61 {\pm 0.03}$ \\
        & REVI. & $0.99 {\pm 0.02}$& $0.43 {\pm 0.02}$ & $ 0.10 {\pm 0.01}$ & $0.11 {\pm 0.01}$ & $1.11 {\pm 0.01}$&$0.07 {\pm 0.01}$ & $0.58 {\pm 0.02}$\\
        & CLUE  & $0.81 {\pm 0.03}$& $0.11 {\pm 0.01}$ & $\textbf{0.04} {\pm 0.01} $ & $0.17 {\pm 0.03}$ & $1.41 {\pm 0.05}$&$\textbf{0.04} {\pm 0.01}$ & $0.62 {\pm 0.01}$\\
        & FACE  & $0.89 {\pm 0.02}$& $0.74 {\pm 0.04}$ & $\underline{0.07} {\pm 0.01}$ & $\underline{0.06} {\pm 0.01}$ & $0.97 {\pm 0.02}$ &$0.06 {\pm 0.01}$ & $\underline{0.75} {\pm 0.02}$ \\
        & CounterNet & $0.86 {\pm 0.02}$ & $0.69 {\pm 0.02}$ & $\underline{0.07} {\pm 0.02}$ & $0.09 {\pm 0.01}$ & $\underline{0.96} {\pm 0.02}$ & $0.06 {\pm 0.01}$ & $\textbf{0.94} {\pm 0.01}$\\
        & TDCE & $0.85 {\pm 0.04}$& $\underline{0.80} {\pm 0.03}$ & $\textbf{0.05} {\pm 0.01}$ & $\textbf{0.03} {\pm 0.02}$ & $\textbf{0.90} {\pm 0.02}$ & $\textbf{0.04} {\pm 0.01}$ & $\textbf{0.94} {\pm 0.04}$ \\
        \hline
        \multirow{7}{*}{\rotatebox[origin=c]{90}{\textbf{LAW}}}
        & Wach. & $\textbf{0.17} {\pm 0.04}$& $\textbf{1.22} {\pm 0.05}$ & $0.13 {\pm 0.02}$ & $0.11 {\pm 0.02}$& $1.73 {\pm 0.02}$ & $0.12 {\pm 0.02}$ & $0.58 {\pm 0.01}$ \\
        & CCH. & $0.99 {\pm 0.02}$& $0.20 {\pm 0.01}$& $0.07 {\pm 0.01}$ & $\underline{0.05} {\pm 0.01}$ & $0.95 {\pm 0.03}$ & $\underline{0.09} {\pm 0.02}$ & $\textbf{0.99} {\pm 0.01}$\\
        & REVI. & $\underline{0.71} {\pm 0.03}$ & $0.91 {\pm 0.03}$ & $\underline{0.06} {\pm 0.01}$ & $0.06 {\pm 0.01}$ & $1.56 {\pm 0.05}$ & $0.11 {\pm 0.01}$ & $0.61 {\pm 0.01}$ \\
        & CLUE & $0.79 {\pm 0.02}$ & $0.37 {\pm 0.01}$ & $0.07 {\pm 0.01}$ & $\underline{0.05} {\pm 0.01}$ & $1.21 {\pm 0.02}$ & $\textbf{0.06} {\pm 0.02}$ & $\textbf{0.99} {\pm 0.01}$ \\
        & FACE & $0.81 {\pm 0.02}$& $0.83 {\pm 0.02}$ & $\textbf{0.03} {\pm 0.01}$ & $\textbf{0.04} {\pm 0.01}$ & $1.31 {\pm 0.06}$ & $0.11 {\pm 0.02}$ & ${0.81} {\pm 0.02}$  \\
        & CounterNet & $0.79 {\pm 0.01}$ & $0.91 {\pm 0.02}$ & $0.07 {\pm 0.01}$ & $0.06 {\pm 0.01}$ & $\underline{0.93} {\pm 0.03}$ & $0.08 {\pm 0.01}$ & $\textbf{0.99} {\pm 0.01}$\\
        & TDCE & $0.81 {\pm 0.02}$ & $\underline{0.97} {\pm 0.03}$ & $\underline{0.06} {\pm 0.02}$ & $\textbf{0.04} {\pm 0.02}$ & $\textbf{0.89} {\pm 0.05}$ & $\textbf{0.06} {\pm 0.01}$& $\textbf{0.99} {\pm 0.01}$ \\
        \hline\hline
    \end{tabular}
    }
    \caption{Counterfactual quantitative evaluation without masking of features that are allowed to change. We provide an evaluation according to the metrics described in the text. The arrow beside each metric indicates the preferred value. We select one feature to mask in the masking setting. (bold = 1st, underline = 2nd). Note: The classifier in CounterNet requires a different architecture because it is model-dependent.}
    \label{tab:cf_eval}
\end{table*}

\begin{table*}[t!]
    \centering
    \resizebox{0.9\textwidth}{!}{
    \begin{tabular}{c c|c|c|c|c|c|c }
        \hline\hline
        \multicolumn{8}{c}{\textbf{Counterfactual Evaluations}}\\
        \hline
        \multicolumn{2}{c|}{\textbf{Model}} & L2$\downarrow$ & Diversity$\uparrow$ & Instability$\downarrow$ & IM1$\downarrow$& IM2$\downarrow$  & Validity$\uparrow$ \\
        \hline
        \multirow{7}{*}{\textbf{LCD}}
        & Wach. &$\textbf{0.34} {\pm 0.03}$& $\underline{0.73} {\pm 0.03}$& $\underline{0.12} {\pm 0.01}$& $1.04 {\pm 0.05}$ & $0.27 {\pm 0.03}$ & $0.75 {\pm 0.03}$\\
        & CCH.  &$0.50 {\pm 0.03}$&  $0.36 {\pm 0.03}$ & $0.29 {\pm 0.02}$  & $\textbf{0.64} {\pm 0.05}$ & $\underline{0.16} {\pm 0.01}$  & $\textbf{0.98} {\pm 0.01}$\\
        & REVI. &$0.52 {\pm 0.02}$&  $0.33 {\pm 0.03}$ & $0.21 {\pm 0.02}$ & $0.82 {\pm 0.04}$ & $0.19 {\pm 0.02}$ & $\textbf{0.98} {\pm 0.01}$\\
        & CLUE   &$0.49 {\pm 0.02}$&  $0.38 {\pm 0.04}$ & $0.24 {\pm 0.02}$ & $0.92 {\pm 0.02}$ & $\underline{0.15} {\pm 0.01}$ & $0.81 {\pm 0.02}$\\
        & FACE   & $0.69 {\pm 0.02}$ & $0.55 {\pm 0.03}$ & $0.17 {\pm 0.01}$ & $0.79 {\pm 0.07}$ & $0.20 {\pm 0.01}$ & $0.87 {\pm 0.01}$\\
        & CounterNet & $\underline{0.35} {\pm 0.01}$ & $0.45 {\pm 0.03}$ & $0.25 {\pm 0.02}$ &  $1.09 {\pm 0.03}$ & $0.88 {\pm 0.03}$ & $\textbf{0.99} {\pm 0.01}$ \\
        & {TDCE}  &$0.49 {\pm 0.02}$&$\textbf{0.77} {\pm 0.03}$ &$\textbf{0.09} {\pm 0.02}$ &$\underline{0.77} {\pm 0.02}$ &$\textbf{0.06} {\pm 0.02}$ & $\textbf{0.99} {\pm 0.01}$\\
        \hline
        \multirow{7}{*}{\textbf{GMC}}
        & Wach. &$\textbf{0.04} {\pm 0.01}$& $\underline{0.23} {\pm 0.02}$ & $0.10 {\pm 0.01} $  & $1.13 {\pm 0.09}$ & $\underline{0.13} {\pm 0.02}$ & $0.57 {\pm 0.03}$\\
        & CCH.  & $0.17 {\pm 0.02}$& $0.21 {\pm 0.01}$ & $0.11 {\pm 0.01}$ & $1.19 {\pm 0.03}$ &  $0.15 {\pm 0.01}$  & $0.52 {\pm 0.02}$ \\
        & REVI.  & $0.16 {\pm 0.02}$& $0.21 {\pm 0.02}$ & $0.12 {\pm 0.01}$  & $1.10 {\pm 0.05}$ & $0.17 {\pm 0.01}$ & $0.53 {\pm 0.02}$\\
        & CLUE   & $0.11 {\pm 0.02}$& $0.20 {\pm 0.02}$ & $\underline{0.08} {\pm 0.01}$ & $1.32 {\pm 0.05}$ & $\underline{0.13} {\pm 0.01}$ & $0.57 {\pm 0.01}$ \\
        & FACE   &$\underline{0.09} {\pm 0.03}$&  $0.16 {\pm 0.02}$ & $\textbf{0.07} {\pm 0.02}$ & $\underline{1.03} {\pm 0.02}$ & $\underline{0.13} {\pm 0.02}$ & $0.65 {\pm 0.02}$\\
        & CounterNet & $0.20 {\pm 0.01}$ & $0.17 {\pm 0.02}$ & $0.10 {\pm 0.01}$ & $1.09 {\pm 0.02}$ & $0.16 {\pm 0.02}$ & $\underline{0.90} {\pm 0.01}$\\
        & TDCE   &$0.11 {\pm 0.02}$& $\textbf{0.83} {\pm 0.03}$ & $\textbf{0.06} {\pm 0.01}$ & $\textbf{0.99} {\pm 0.03}$ & $\textbf{0.05} {\pm 0.01}$ & $\textbf{0.94} {\pm 0.02}$ \\
        \hline
        \multirow{7}{*}{\textbf{Adult}}
        & Wach. & $\textbf{0.28} {\pm 0.04}$& $\textbf{1.01} {\pm 0.03}$ & $0.15 {\pm 0.01}$  & $\underline{1.00} {\pm 0.05}$ & $0.07 {\pm 0.01}$  & $0.51 {\pm 0.02}$\\ 
        & CCH. & $0.62 {\pm 0.03}$& $0.72 {\pm 0.03}$ & $0.17 {\pm 0.02}$  & $1.11 {\pm 0.03}$ & $0.11 {\pm 0.01}$ & $0.55 {\pm 0.03}$\\
        & REVI.&$0.78 {\pm 0.03}$ & $0.78 {\pm 0.02}$ & $0.09 {\pm 0.01}$  & $1.11 {\pm 0.06}$ & $0.07 {\pm 0.01}$ & $0.61 {\pm 0.02}$\\
        & CLUE   & $\underline{0.61} {\pm0.03}$& $0.71 {\pm 0.03}$ & $\textbf{0.07} {\pm 0.01}$  & $1.14 {\pm 0.03}$ & $\textbf{0.06} {\pm 0.01}$ & $0.55 {\pm 0.01}$\\
        & FACE   & $0.85 {\pm 0.02}$& $0.79 {\pm 0.02}$ & $\underline{0.08} {\pm 0.01}$ & $\underline{1.02} {\pm 0.02}$& $\textbf{0.06} {\pm 0.01}$ & $0.58 {\pm 0.02}$\\
        & CounterNet & $0.86 {\pm 0.02}$ & $0.69 {\pm 0.02}$ & $\underline{0.07} {\pm 0.02}$ & $1.03 {\pm 0.02}$ & $0.10 {\pm 0.01}$ & $\underline{0.84} {\pm 0.01}$\\
        & TDCE  & $0.79 {\pm 0.03}$& $\underline{0.82} {\pm 0.03}$& $\textbf{0.06} {\pm 0.02}$ & $\textbf{0.93} {\pm 0.04}$ & $\textbf{0.05} {\pm 0.02}$ & $\textbf{0.86} {\pm 0.04}$ \\
        \hline
        \multirow{7}{*}{\textbf{LAW}}
        & Wach. &$\textbf{0.26} {\pm 0.02}$ & $\textbf{1.12} {\pm 0.02}$ & $0.13 {\pm 0.02}$ & $1.54 {\pm 0.01}$ & $0.14 {\pm 0.01}$  & $0.39 {\pm 0.03}$\\
        & CCH.  &$0.89 {\pm 0.02}$ & $0.75 {\pm 0.02}$ & $\underline{0.05} {\pm 0.01}$ & $1.43 {\pm 0.02}$ & $0.13 {\pm 0.03}$ & $\textbf{0.99} {\pm 0.01}$\\
        & REVI. & $0.80 {\pm 0.03}$ & $\underline{1.02} {\pm 0.02}$ & $0.09 {\pm 0.02}$ & $1.37 {\pm 0.02}$& $0.11 {\pm 0.01}$ & $0.60 {\pm 0.01}$\\
        & CLUE  &$0.81 {\pm 0.02}$& $0.68 {\pm 0.01}$ & $0.07 {\pm 0.02}$ & $\underline{0.76} {\pm 0.01}$ & $\underline{0.09} {\pm 0.01}$ & $\textbf{0.99} {\pm 0.01}$\\
        & FACE & $0.92 {\pm 0.01}$ & $0.81 {\pm 0.03}$ & $\textbf{0.04} {\pm 0.01}$ & $1.63 {\pm 0.01}$ & $0.16 {\pm 0.01}$  & $0.80 {\pm 0.03}$\\
        & CounterNet & $0.79 {\pm 0.01}$ & $0.91 {\pm 0.02}$ & $0.07 {\pm 0.01}$ & $0.96 {\pm 0.03}$ & $\underline{0.09} {\pm 0.01}$ & $\underline{0.98} {\pm 0.01}$\\
        & TDCE & $\underline{0.79} {\pm 0.02}$&  $0.95 {\pm 0.03}$ & $\underline{0.05} {\pm 0.01}$ & $\textbf{0.73} {\pm 0.03}$ & $\textbf{0.07} {\pm 0.01}$ & $\underline{0.98} {\pm 0.02}$\\
        \hline\hline
    \end{tabular}
    }
    \caption{Counterfactual quantitative evaluation with masking of features that are allowed to change. We provide an evaluation according to the metrics described in the text. The arrow beside each metric indicates the preferred value. We select one feature to mask in the masking setting. (bold = 1st, underline = 2nd)}
    \label{tab:cf_eval_mask}
\end{table*}

\paragraph{Quantitative evaluation.} We show quantitative results in Table \ref{tab:cf_eval} and Table \ref{tab:cf_eval_mask}. The Table \ref{tab:cf_eval} is for the no-masking setting, while the Table \ref{tab:cf_eval_mask} is for masking a preselected feature, prohibiting it from being changed by the counterfactual generator. While there is no single combination of these various metrics that determines relative performance, a subjective evaluation indicates competitive performance of our TDCE method. For example, it achieves the best validity among other methods with significant margins, indicating nearly all the generated samples have turned to the target class -- arguably a prerequisite for other metrics to have meaning. We observe the competitive or superior performance on IM1 and IM2 as well, indicating that the generated counterfactuals stay on the data manifold of the target class and have better interpretability.

In the experiments, the baseline Wachter shows volatile performance in terms of the metrics across different datasets. It is able to produce robust samples with fair diversity, but it lacks interpretability such as on LCD. We also note that CCHVAE, REVISE and CLUE show strong robustness (low instability). However, these results are usually accompanied by a low diversity score, indicating that the algorithms tends to generate similar counterfactuals. The same conclusion can also be drawn from their relatively high JS score, a high JS score suggesting that the generated categorical variables do not match the distribution in the target class. 

To give a concrete example of what we observed, when analyzing the LCD dataset we found that the FICO score dominates the classifier's decision, while the categorical loan-term variable is less important. All benchmark methods tend to completely ignore the less significant feature because changing one's FICO score quickly changes the classification. In contrast, our TDCE method pays more attention to each feature, which creates less discrepancy between the distributions of the counterfactual and counter-class data.
We also note that TDCE is faster compared with other searching algorithms. Wachter, REVISE and FACE require iterative searching for each individual sample, while TDCE's reverse process is a fixed Markov chain learned during training.

\paragraph{Proximity.} We evaluate the proximity between the generated counterfactual sample and the original query samples using L2 distance. In the experiments, we observe that Wachter has achieved the lowest L2 distance across all the datasets. This is because Wachter stops searching when it finds the sample that changes its label with minimum modification in the feature space. This can produce a sample around the decision boundary, though by using minimal changes it might produce less a meaningful explanation that is out of sample (i.e. has high IM1/IM2). We emphasize that we are focusing on a higher validity and greater interpretability (i.e. low IM1/IM2) and therefore are willing to sacrifice on L2 distance. Nevertheless, our TDCE is adjustable for the importance of L2 distance by simply adding a regularization to Equation \ref{eqn:cont_guided}. 

\paragraph{Efficency.} We show the runtime for generating counterfactual samples in Table \ref{tab:cf_speed}. We observe that CounterNet runs faster than other methods because it generates the counterfactuals through a single forward pass of a neural network. However, we highlight that our TDCE is model-agnostic, whereas CounterNet is model-dependent, requiring the encoder to be the classifier as well. In addition, TDCE is capable of generating more interpretable samples (i.e. low IM2/IM2) for all datasets we consider. Aside from CounterNet, TDCE runs faster than all other search-based algorithms because TDCE can generate the counterfactual samples in a pre-defined number of reverse steps, while search algorithms rely heavily on the objective function. 

\begin{table*}[t!]
    \centering
    \resizebox{0.7\textwidth}{!}{
    \begin{tabular}{c |c|c|c|c|c|c| c}
        \hline\hline
        \multicolumn{8}{c}{\textbf{Counterfactual Generation Time (sec/100 sample)}}\\
        \hline
        \textbf{\backslashbox{Dataset}{Model}} & Wach. & CCH.  & REVI. & CLUE & FACE & CounterNet  & {TDCE} \\
        \hline
        \textbf{LCD} & 0.9& 0.8& 0.8& 0.9& 1.0& 0.1& 0.3\\
        \hline 
        \textbf{GMC} & 1.1 & 1.2& 1.1& 1.3& 1.5& 0.2 & 0.5 \\
        \hline
        \textbf{Adult}& 1.1 & 1.3& 1.2& 1.3& 1.6& 0.2& 0.5 \\
        \hline
        \textbf{LAW}& 1.0 & 1.1 & 1.1& 1.2& 1.4 & 0.1& 0.4  \\
        \hline
        \hline
    \end{tabular}
    }
    \caption{Counterfactual generation time in seconds per 100 samples on the same computer setting.}
    \label{tab:cf_speed}
\end{table*}

\paragraph{Discussion.} Through the experiments, each benchmark shows a good ability to generate counterfactuals, yet they are somewhat limited due to design issues. Gradient-based method such as Wachter, which directly operate in the feature space, can often be fooled by spurious changes. This causes the classifier to change its prediction by small, uninterpretable movements. VAE-based methods such as CLUE, REVISE and CCHVAE leverage the generative power of VAEs but heavily rely on the black-box latent space in which they work. This may lead to counterfactuals that fall off the data manifold. Graph methods like FACE depend on sample quality and coverage, and search only within the graph itself, producing a sample that already exists in the data set. In contrast, our TDCE uses a diffusion model operating directly in the ambient features space. This connects it to Wachter, while still leveraging the generative power available to deep models such as the VAE. We believe the combination of these desireable aspects accounts for our good relative performance.

\begin{figure*}[th!]
\centering
  \includegraphics[trim={0 0 0 0cm},clip,width=1\textwidth]{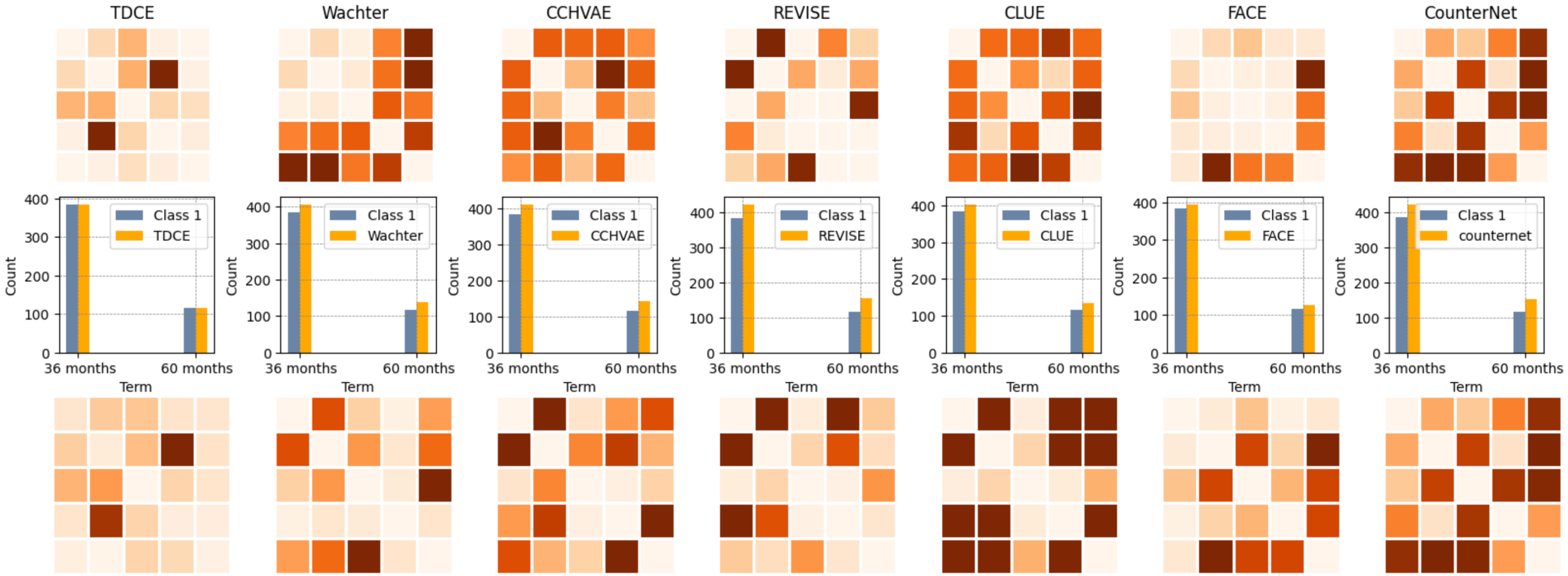}
  \caption{Qualitative comparisons between TDCE and others methods on LCD dataset. Top: the absolute difference between the correlation of counterfactual samples and that of the target class for the continuous features (debt-to-income ratio, loan amount, interest rate, annual income, FICO score). Middle: Bar plots for the categorical variable (loan term: 36 months or 60 months). Bottom: The same metrics as the top when masking the categorical variable. Note: The absolute difference is the same for CounterNet because it simply copies the immutable features from the query sample. }
  \label{fig:qual_plot}
\end{figure*}
\paragraph{\textbf{Qualitative evaluations}} We also provide a qualitative comparison on the LCD dataset in Figure \ref{fig:qual_plot}. LCD contains five numerical features and one categorical feature. In the non-mask setting,  all features are guided by the classifier. Darker pixels show greater discrepancy between the counterfactual sample and the target class. As we can see from the first row, TDCE has fewer darker pixels in general. Importantly, in the middle row, the generated categorical variables from TDCE perfectly match the distribution of the target class. In the masking setting, we fix the categorical variable and only guide the continuous features. In general, the distributional agreement between the target class and the counterfactual class is much greater with TDCE than with other methods.

\paragraph{Discussion on temperature $\tau$}  We evaluate the temperature $\tau$ over IM1, IM2, JS and Validity on LCD dataset in Figure \ref{fig:tau}.The temperature affects the overall counterfactual performance significantly. As the temperature drops, the Gumbel-softmax approaches to a one-hot vector. However, this blocks the flow of gradients from the classifier back to the categorical variable, producing the vanishing gradient issue. In this case, the reverse process is mainly governed by the continuous features. This does not automatically prevent a model from generating valid counterfactual samples because the categorical variables might not be significant in the prediction. However, it does prevent the model from generating realistic counterfactual samples, which diminishes the legitimacy of counterfactual explanations. 
\begin{figure*}[th!]
\centering
  \includegraphics[trim={0 0 0 0cm},clip,width=1\textwidth]{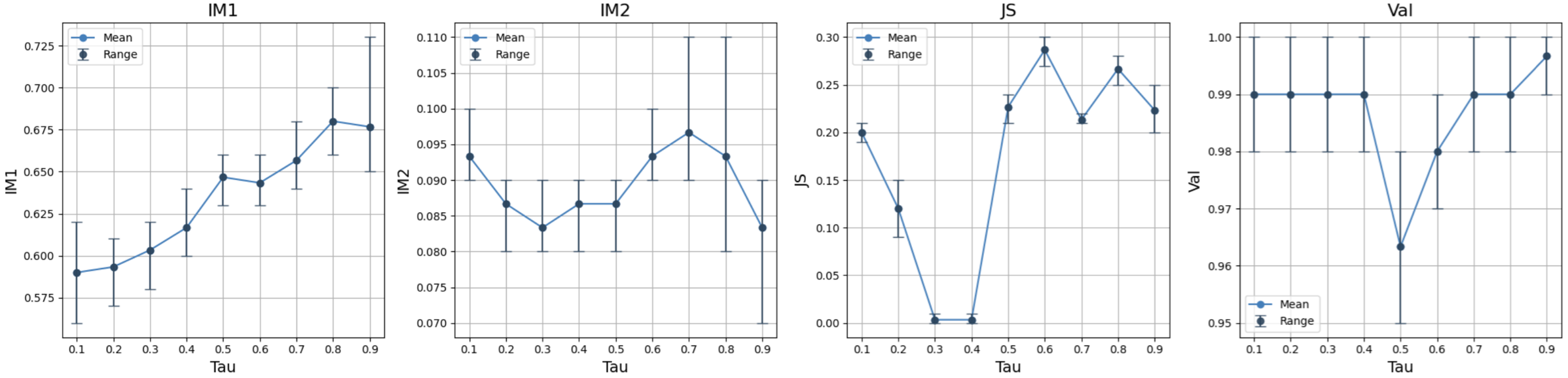}
  \caption{An analysis on the temperature $\tau$ of LCD dataset. The best JS score is achieved when $\tau=0.3$ with balanced IM1 and IM2 scores.}  
  \label{fig:tau}
\end{figure*}
As the temperature increases, the counterfactual generator tends to recover the distribution of the categorical variable in the target class. However, according to Theorem \ref{Thm:thm_kl}, our reverse process might diverge from the true reverse process as the temperature becomes larger. The resulting model then may not be able to recover the distribution of categorical variables well. In the experiments, we search the best $\tau$ ranging from 0.1 to 5 for each dataset. 

\section{Conclusion}
We proposed a tabular diffusion model that generates counterfactual explanations for a classifier. We leverage the Gumbel-softmax distribution to re-parameterize one-hot vectors into a continuous vector, which allows us to utilize the gradients from the classifier to guide the reverse process. We provided theoretical bounds and experimented on four popular tabular datasets. Quantitative results support that our method combines the advantages of working directly in the feature space of Wachter and graph methods with the advantage of neural networks of VAE-based methods.

\bibliographystyle{elsarticle-num-names} 
\bibliography{ref}

\begin{thebibliography}{44}
\expandafter\ifx\csname natexlab\endcsname\relax\def\natexlab#1{#1}\fi
\providecommand{\url}[1]{\texttt{#1}}
\providecommand{\href}[2]{#2}
\providecommand{\path}[1]{#1}
\providecommand{\DOIprefix}{doi:}
\providecommand{\ArXivprefix}{arXiv:}
\providecommand{\URLprefix}{URL: }
\providecommand{\Pubmedprefix}{pmid:}
\providecommand{\doi}[1]{\href{http://dx.doi.org/#1}{\path{#1}}}
\providecommand{\Pubmed}[1]{\href{pmid:#1}{\path{#1}}}
\providecommand{\bibinfo}[2]{#2}
\ifx\xfnm\relax \def\xfnm[#1]{\unskip,\space#1}\fi
\bibitem[{Ribeiro et~al.(2016)Ribeiro, Singh, and Guestrin}]{ribeiro2016should}
\bibinfo{author}{M.~T. Ribeiro}, \bibinfo{author}{S.~Singh}, \bibinfo{author}{C.~Guestrin},
\newblock \bibinfo{title}{" why should i trust you?" explaining the predictions of any classifier},
\newblock in: \bibinfo{booktitle}{International Conference on Knowledge Discovery and Data Mining}, \bibinfo{year}{2016}.
\bibitem[{Ibrahim et~al.(2019)Ibrahim, Louie, Modarres, and Paisley}]{ibrahim2019global}
\bibinfo{author}{M.~Ibrahim}, \bibinfo{author}{M.~Louie}, \bibinfo{author}{C.~Modarres}, \bibinfo{author}{J.~Paisley},
\newblock \bibinfo{title}{Global explanations of neural networks: Mapping the landscape of predictions},
\newblock in: \bibinfo{booktitle}{AAAI/ACM Conference on AI, Ethics, and Society}, \bibinfo{year}{2019}.
\bibitem[{Shrikumar et~al.(2017)Shrikumar, Greenside, and Kundaje}]{shrikumar2017learning}
\bibinfo{author}{A.~Shrikumar}, \bibinfo{author}{P.~Greenside}, \bibinfo{author}{A.~Kundaje},
\newblock \bibinfo{title}{Learning important features through propagating activation differences},
\newblock in: \bibinfo{booktitle}{International Conference on Machine Learning}, \bibinfo{year}{2017}.
\bibitem[{Sundararajan et~al.(2017)Sundararajan, Taly, and Yan}]{sundararajan2017axiomatic}
\bibinfo{author}{M.~Sundararajan}, \bibinfo{author}{A.~Taly}, \bibinfo{author}{Q.~Yan},
\newblock \bibinfo{title}{Axiomatic attribution for deep networks},
\newblock in: \bibinfo{booktitle}{International Conference on Machine Learning}, \bibinfo{year}{2017}.
\bibitem[{Lundberg and Lee(2017)}]{lundberg2017unified}
\bibinfo{author}{S.~Lundberg}, \bibinfo{author}{S.-I. Lee},
\newblock \bibinfo{title}{A unified approach to interpreting model predictions},
\newblock in: \bibinfo{booktitle}{Advances in Neural Information Processing Systems}, \bibinfo{year}{2017}.
\bibitem[{Liu et~al.(2024)Liu, Barocas, Kleinberg, and Levy}]{liu2024actionability}
\bibinfo{author}{L.~T. Liu}, \bibinfo{author}{S.~Barocas}, \bibinfo{author}{J.~Kleinberg}, \bibinfo{author}{K.~Levy},
\newblock \bibinfo{title}{On the actionability of outcome prediction},
\newblock in: \bibinfo{booktitle}{AAAI Conference on Artificial Intelligence}, \bibinfo{year}{2024}.
\bibitem[{Liu et~al.(2018)Liu, Dean, Rolf, Simchowitz, and Hardt}]{liu2018delayed}
\bibinfo{author}{L.~T. Liu}, \bibinfo{author}{S.~Dean}, \bibinfo{author}{E.~Rolf}, \bibinfo{author}{M.~Simchowitz}, \bibinfo{author}{M.~Hardt},
\newblock \bibinfo{title}{Delayed impact of fair machine learning},
\newblock in: \bibinfo{booktitle}{International Conference on Machine Learning}, \bibinfo{year}{2018}.
\bibitem[{Agarwal et~al.(2021)Agarwal, Melnick, Frosst, Zhang, Lengerich, Caruana, and Hinton}]{agarwal2021neural}
\bibinfo{author}{R.~Agarwal}, \bibinfo{author}{L.~Melnick}, \bibinfo{author}{N.~Frosst}, \bibinfo{author}{X.~Zhang}, \bibinfo{author}{B.~Lengerich}, \bibinfo{author}{R.~Caruana}, \bibinfo{author}{G.~E. Hinton},
\newblock \bibinfo{title}{Neural additive models: Interpretable machine learning with neural nets},
\newblock in: \bibinfo{booktitle}{Advances in Neural Information Processing Systems}, \bibinfo{year}{2021}.
\bibitem[{Radenovic et~al.(2022)Radenovic, Dubey, and Mahajan}]{radenovic2022neural}
\bibinfo{author}{F.~Radenovic}, \bibinfo{author}{A.~Dubey}, \bibinfo{author}{D.~Mahajan},
\newblock \bibinfo{title}{Neural basis models for interpretability},
\newblock in: \bibinfo{booktitle}{Advances in Neural Information Processing Systems}, \bibinfo{year}{2022}.
\bibitem[{Chang et~al.(2022)Chang, Caruana, and Goldenberg}]{chang2021node}
\bibinfo{author}{C.-H. Chang}, \bibinfo{author}{R.~Caruana}, \bibinfo{author}{A.~Goldenberg},
\newblock \bibinfo{title}{Node-{GAM}: Neural generalized additive model for interpretable deep learning},
\newblock \bibinfo{journal}{International Conference on Learning Representations}  (\bibinfo{year}{2022}).
\bibitem[{Zhang et~al.(2024)Zhang, Barr, and Paisley}]{zhang2024gaussian}
\bibinfo{author}{W.~Zhang}, \bibinfo{author}{B.~Barr}, \bibinfo{author}{J.~Paisley},
\newblock \bibinfo{title}{Gaussian process neural additive models},
\newblock in: \bibinfo{booktitle}{AAAI Conference on Artificial Intelligence}, \bibinfo{year}{2024}.
\bibitem[{Wachter et~al.(2017)Wachter, Mittelstadt, and Russell}]{wachter2017counterfactual}
\bibinfo{author}{S.~Wachter}, \bibinfo{author}{B.~Mittelstadt}, \bibinfo{author}{C.~Russell},
\newblock \bibinfo{title}{Counterfactual explanations without opening the black box: Automated decisions and the {GDPR}},
\newblock \bibinfo{journal}{Harvard Journal of Law \& Tech.} \bibinfo{volume}{31} (\bibinfo{year}{2017}) \bibinfo{pages}{841}.
\bibitem[{Poyiadzi et~al.(2020)Poyiadzi, Sokol, Santos-Rodriguez, De~Bie, and Flach}]{poyiadzi2020face}
\bibinfo{author}{R.~Poyiadzi}, \bibinfo{author}{K.~Sokol}, \bibinfo{author}{R.~Santos-Rodriguez}, \bibinfo{author}{T.~De~Bie}, \bibinfo{author}{P.~Flach},
\newblock \bibinfo{title}{{FACE}: feasible and actionable counterfactual explanations},
\newblock in: \bibinfo{booktitle}{AAAI/ACM Conference on AI, Ethics, and Society}, \bibinfo{year}{2020}.
\bibitem[{Van~Looveren and Klaise(2021)}]{van2021interpretable}
\bibinfo{author}{A.~Van~Looveren}, \bibinfo{author}{J.~Klaise},
\newblock \bibinfo{title}{Interpretable counterfactual explanations guided by prototypes},
\newblock in: \bibinfo{booktitle}{Joint European Conference on Machine Learning and Knowledge Discovery in Databases}, \bibinfo{year}{2021}.
\bibitem[{Dandl et~al.(2020)Dandl, Molnar, Binder, and Bischl}]{dandl2020multi}
\bibinfo{author}{S.~Dandl}, \bibinfo{author}{C.~Molnar}, \bibinfo{author}{M.~Binder}, \bibinfo{author}{B.~Bischl},
\newblock \bibinfo{title}{Multi-objective counterfactual explanations},
\newblock in: \bibinfo{booktitle}{International Conference on Parallel Problem Solving from Nature}, \bibinfo{year}{2020}.
\bibitem[{Guidotti et~al.(2018)Guidotti, Monreale, Ruggieri, Pedreschi, Turini, and Giannotti}]{guidotti2018local}
\bibinfo{author}{R.~Guidotti}, \bibinfo{author}{A.~Monreale}, \bibinfo{author}{S.~Ruggieri}, \bibinfo{author}{D.~Pedreschi}, \bibinfo{author}{F.~Turini}, \bibinfo{author}{F.~Giannotti},
\newblock \bibinfo{title}{Local rule-based explanations of black box decision systems},
\newblock \bibinfo{journal}{arXiv preprint arXiv:1805.10820}  (\bibinfo{year}{2018}).
\bibitem[{Mothilal et~al.(2020)Mothilal, Sharma, and Tan}]{mothilal2020explaining}
\bibinfo{author}{R.~K. Mothilal}, \bibinfo{author}{A.~Sharma}, \bibinfo{author}{C.~Tan},
\newblock \bibinfo{title}{Explaining machine learning classifiers through diverse counterfactual explanations},
\newblock in: \bibinfo{booktitle}{Conference on Fairness, Accountability, and Transparency}, \bibinfo{year}{2020}.
\bibitem[{Fern{\'a}ndez et~al.(2020)Fern{\'a}ndez, De~Diego, Ace{\~n}a, Fern{\'a}ndez-Isabel, and Moguerza}]{fernandez2020random}
\bibinfo{author}{R.~R. Fern{\'a}ndez}, \bibinfo{author}{I.~M. De~Diego}, \bibinfo{author}{V.~Ace{\~n}a}, \bibinfo{author}{A.~Fern{\'a}ndez-Isabel}, \bibinfo{author}{J.~M. Moguerza},
\newblock \bibinfo{title}{Random forest explainability using counterfactual sets},
\newblock \bibinfo{journal}{Information Fusion} \bibinfo{volume}{63} (\bibinfo{year}{2020}) \bibinfo{pages}{196--207}.
\bibitem[{Kingma(2013)}]{kingma2013auto}
\bibinfo{author}{D.~P. Kingma},
\newblock \bibinfo{title}{Auto-encoding variational {B}ayes},
\newblock \bibinfo{journal}{arXiv preprint arXiv:1312.6114}  (\bibinfo{year}{2013}).
\bibitem[{Joshi et~al.(2019)Joshi, Koyejo, Vijitbenjaronk, Kim, and Ghosh}]{joshi2019towards}
\bibinfo{author}{S.~Joshi}, \bibinfo{author}{O.~Koyejo}, \bibinfo{author}{W.~Vijitbenjaronk}, \bibinfo{author}{B.~Kim}, \bibinfo{author}{J.~Ghosh},
\newblock \bibinfo{title}{Towards realistic individual recourse and actionable explanations in black-box decision making systems},
\newblock \bibinfo{journal}{arXiv preprint arXiv:1907.09615}  (\bibinfo{year}{2019}).
\bibitem[{Antor{\'a}n et~al.(2020)Antor{\'a}n, Bhatt, Adel, Weller, and Hern{\'a}ndez-Lobato}]{antoran2020getting}
\bibinfo{author}{J.~Antor{\'a}n}, \bibinfo{author}{U.~Bhatt}, \bibinfo{author}{T.~Adel}, \bibinfo{author}{A.~Weller}, \bibinfo{author}{J.~M. Hern{\'a}ndez-Lobato},
\newblock \bibinfo{title}{Getting a clue: A method for explaining uncertainty estimates},
\newblock \bibinfo{journal}{arXiv preprint arXiv:2006.06848}  (\bibinfo{year}{2020}).
\bibitem[{Pawelczyk et~al.(2020)Pawelczyk, Broelemann, and Kasneci}]{pawelczyk2020learning}
\bibinfo{author}{M.~Pawelczyk}, \bibinfo{author}{K.~Broelemann}, \bibinfo{author}{G.~Kasneci},
\newblock \bibinfo{title}{Learning model-agnostic counterfactual explanations for tabular data},
\newblock in: \bibinfo{booktitle}{The Web Conference}, \bibinfo{year}{2020}.
\bibitem[{Ho et~al.(2020)Ho, Jain, and Abbeel}]{ho2020denoising}
\bibinfo{author}{J.~Ho}, \bibinfo{author}{A.~Jain}, \bibinfo{author}{P.~Abbeel},
\newblock \bibinfo{title}{Denoising diffusion probabilistic models},
\newblock in: \bibinfo{booktitle}{Advances in Neural Information Processing Systems}, \bibinfo{year}{2020}.
\bibitem[{Song et~al.(2020)Song, Meng, and Ermon}]{song2020denoising}
\bibinfo{author}{J.~Song}, \bibinfo{author}{C.~Meng}, \bibinfo{author}{S.~Ermon},
\newblock \bibinfo{title}{Denoising diffusion implicit models},
\newblock \bibinfo{journal}{arXiv preprint arXiv:2010.02502}  (\bibinfo{year}{2020}).
\bibitem[{Dhariwal and Nichol(2021)}]{dhariwal2021diffusion}
\bibinfo{author}{P.~Dhariwal}, \bibinfo{author}{A.~Nichol},
\newblock \bibinfo{title}{Diffusion models beat {GAN}s on image synthesis},
\newblock in: \bibinfo{booktitle}{Advances in Neural Information Processing Systems}, \bibinfo{year}{2021}.
\bibitem[{Augustin et~al.(2022)Augustin, Boreiko, Croce, and Hein}]{augustin2022diffusion}
\bibinfo{author}{M.~Augustin}, \bibinfo{author}{V.~Boreiko}, \bibinfo{author}{F.~Croce}, \bibinfo{author}{M.~Hein},
\newblock \bibinfo{title}{Diffusion visual counterfactual explanations},
\newblock in: \bibinfo{booktitle}{Advances in Neural Information Processing Systems}, \bibinfo{year}{2022}.
\bibitem[{Hoogeboom et~al.(2021)Hoogeboom, Nielsen, Jaini, Forr{\'e}, and Welling}]{hoogeboom2021argmax}
\bibinfo{author}{E.~Hoogeboom}, \bibinfo{author}{D.~Nielsen}, \bibinfo{author}{P.~Jaini}, \bibinfo{author}{P.~Forr{\'e}}, \bibinfo{author}{M.~Welling},
\newblock \bibinfo{title}{Argmax flows and multinomial diffusion: Learning categorical distributions},
\newblock in: \bibinfo{booktitle}{Advances in Neural Information Processing Systems}, \bibinfo{year}{2021}.
\bibitem[{Sun et~al.(2022)Sun, Yu, Dai, Schuurmans, and Dai}]{sun2022score}
\bibinfo{author}{H.~Sun}, \bibinfo{author}{L.~Yu}, \bibinfo{author}{B.~Dai}, \bibinfo{author}{D.~Schuurmans}, \bibinfo{author}{H.~Dai},
\newblock \bibinfo{title}{Score-based continuous-time discrete diffusion models},
\newblock \bibinfo{journal}{arXiv preprint arXiv:2211.16750}  (\bibinfo{year}{2022}).
\bibitem[{Dieleman et~al.(2022)Dieleman, Sartran, Roshannai, Savinov, Ganin, Richemond, Doucet, Strudel, Dyer, Durkan et~al.}]{dieleman2022continuous}
\bibinfo{author}{S.~Dieleman}, \bibinfo{author}{L.~Sartran}, \bibinfo{author}{A.~Roshannai}, \bibinfo{author}{N.~Savinov}, \bibinfo{author}{Y.~Ganin}, \bibinfo{author}{P.~H. Richemond}, \bibinfo{author}{A.~Doucet}, \bibinfo{author}{R.~Strudel}, \bibinfo{author}{C.~Dyer}, \bibinfo{author}{C.~Durkan}, et~al.,
\newblock \bibinfo{title}{Continuous diffusion for categorical data},
\newblock \bibinfo{journal}{arXiv preprint arXiv:2211.15089}  (\bibinfo{year}{2022}).
\bibitem[{Kotelnikov et~al.(2023)Kotelnikov, Baranchuk, Rubachev, and Babenko}]{kotelnikov2023tabddpm}
\bibinfo{author}{A.~Kotelnikov}, \bibinfo{author}{D.~Baranchuk}, \bibinfo{author}{I.~Rubachev}, \bibinfo{author}{A.~Babenko},
\newblock \bibinfo{title}{Tabddpm: Modelling tabular data with diffusion models},
\newblock in: \bibinfo{booktitle}{International Conference on Machine Learning}, \bibinfo{year}{2023}.
\bibitem[{Regol and Coates(2023)}]{regol2023diffusing}
\bibinfo{author}{F.~Regol}, \bibinfo{author}{M.~Coates},
\newblock \bibinfo{title}{Diffusing {G}aussian mixtures for generating categorical data},
\newblock in: \bibinfo{booktitle}{AAAI Conference on Artificial Intelligence}, \bibinfo{year}{2023}.
\bibitem[{Gruver et~al.(2024)Gruver, Stanton, Frey, Rudner, Hotzel, Lafrance-Vanasse, Rajpal, Cho, and Wilson}]{gruver2024protein}
\bibinfo{author}{N.~Gruver}, \bibinfo{author}{S.~Stanton}, \bibinfo{author}{N.~Frey}, \bibinfo{author}{T.~G. Rudner}, \bibinfo{author}{I.~Hotzel}, \bibinfo{author}{J.~Lafrance-Vanasse}, \bibinfo{author}{A.~Rajpal}, \bibinfo{author}{K.~Cho}, \bibinfo{author}{A.~G. Wilson},
\newblock \bibinfo{title}{Protein design with guided discrete diffusion},
\newblock in: \bibinfo{booktitle}{Advances in Neural Information Processing Systems}, \bibinfo{year}{2024}.
\bibitem[{Schiff et~al.(2024)Schiff, Sahoo, Phung, Wang, Boshar, Dalla-torre, de~Almeida, Rush, Pierrot, and Kuleshov}]{schiff2024simple}
\bibinfo{author}{Y.~Schiff}, \bibinfo{author}{S.~S. Sahoo}, \bibinfo{author}{H.~Phung}, \bibinfo{author}{G.~Wang}, \bibinfo{author}{S.~Boshar}, \bibinfo{author}{H.~Dalla-torre}, \bibinfo{author}{B.~P. de~Almeida}, \bibinfo{author}{A.~Rush}, \bibinfo{author}{T.~Pierrot}, \bibinfo{author}{V.~Kuleshov},
\newblock \bibinfo{title}{Simple guidance mechanisms for discrete diffusion models},
\newblock \bibinfo{journal}{arXiv preprint arXiv:2412.10193}  (\bibinfo{year}{2024}).
\bibitem[{Jang et~al.(2017)Jang, Gu, and Poole}]{jang2016categorical}
\bibinfo{author}{E.~Jang}, \bibinfo{author}{S.~Gu}, \bibinfo{author}{B.~Poole},
\newblock \bibinfo{title}{Categorical reparameterization with {G}umbel-softmax},
\newblock in: \bibinfo{booktitle}{International Conference on Learning Representations}, \bibinfo{year}{2017}.
\bibitem[{Nazabal et~al.(2020)Nazabal, Olmos, Ghahramani, and Valera}]{nazabal2020handling}
\bibinfo{author}{A.~Nazabal}, \bibinfo{author}{P.~M. Olmos}, \bibinfo{author}{Z.~Ghahramani}, \bibinfo{author}{I.~Valera},
\newblock \bibinfo{title}{Handling incomplete heterogeneous data using vaes},
\newblock \bibinfo{journal}{Pattern Recognition} \bibinfo{volume}{107} (\bibinfo{year}{2020}) \bibinfo{pages}{107501}.
\bibitem[{Guo et~al.(2023{\natexlab{a}})Guo, Nguyen, and Yadav}]{guo2023counternet}
\bibinfo{author}{H.~Guo}, \bibinfo{author}{T.~H. Nguyen}, \bibinfo{author}{A.~Yadav},
\newblock \bibinfo{title}{Counternet: End-to-end training of prediction aware counterfactual explanations},
\newblock in: \bibinfo{booktitle}{Proceedings of the 29th ACM SIGKDD Conference on Knowledge Discovery and Data Mining}, \bibinfo{year}{2023}{\natexlab{a}}, pp. \bibinfo{pages}{577--589}.
\bibitem[{Guo et~al.(2023{\natexlab{b}})Guo, Jia, Chen, Squicciarini, and Yadav}]{guo2023rocoursenet}
\bibinfo{author}{H.~Guo}, \bibinfo{author}{F.~Jia}, \bibinfo{author}{J.~Chen}, \bibinfo{author}{A.~Squicciarini}, \bibinfo{author}{A.~Yadav},
\newblock \bibinfo{title}{Rocoursenet: Robust training of a prediction aware recourse model},
\newblock in: \bibinfo{booktitle}{Proceedings of the 32nd ACM International Conference on Information and Knowledge Management}, \bibinfo{year}{2023}{\natexlab{b}}, pp. \bibinfo{pages}{619--628}.
\bibitem[{Zhang et~al.(2022)Zhang, Barr, and Paisley}]{zhang2022interpretable}
\bibinfo{author}{W.~Zhang}, \bibinfo{author}{B.~Barr}, \bibinfo{author}{J.~Paisley},
\newblock \bibinfo{title}{An interpretable deep classifier for counterfactual generation},
\newblock in: \bibinfo{booktitle}{Proceedings of the Third ACM International Conference on AI in Finance}, \bibinfo{year}{2022}, pp. \bibinfo{pages}{36--43}.
\bibitem[{Su et~al.(2019)Su, Vargas, and Sakurai}]{su2019one}
\bibinfo{author}{J.~Su}, \bibinfo{author}{D.~V. Vargas}, \bibinfo{author}{K.~Sakurai},
\newblock \bibinfo{title}{One pixel attack for fooling deep neural networks},
\newblock \bibinfo{journal}{IEEE Transactions on Evolutionary Computation} \bibinfo{volume}{23} (\bibinfo{year}{2019}) \bibinfo{pages}{828--841}.
\bibitem[{Sohl-Dickstein et~al.(2015)Sohl-Dickstein, Weiss, Maheswaranathan, and Ganguli}]{sohl2015deep}
\bibinfo{author}{J.~Sohl-Dickstein}, \bibinfo{author}{E.~Weiss}, \bibinfo{author}{N.~Maheswaranathan}, \bibinfo{author}{S.~Ganguli},
\newblock \bibinfo{title}{Deep unsupervised learning using nonequilibrium thermodynamics},
\newblock in: \bibinfo{booktitle}{International Conference on Machine Learning}, \bibinfo{year}{2015}.
\bibitem[{Nichol and Dhariwal(2021)}]{nichol2021improved}
\bibinfo{author}{A.~Q. Nichol}, \bibinfo{author}{P.~Dhariwal},
\newblock \bibinfo{title}{Improved denoising diffusion probabilistic models},
\newblock in: \bibinfo{booktitle}{International Conference on Machine Learning}, \bibinfo{year}{2021}.
\bibitem[{Maddison et~al.(2016)Maddison, Mnih, and Teh}]{maddison2016concrete}
\bibinfo{author}{C.~J. Maddison}, \bibinfo{author}{A.~Mnih}, \bibinfo{author}{Y.~W. Teh},
\newblock \bibinfo{title}{The concrete distribution: A continuous relaxation of discrete random variables},
\newblock \bibinfo{journal}{arXiv preprint arXiv:1611.00712}  (\bibinfo{year}{2016}).
\bibitem[{Avrahami et~al.(2022)Avrahami, Lischinski, and Fried}]{avrahami2022blended}
\bibinfo{author}{O.~Avrahami}, \bibinfo{author}{D.~Lischinski}, \bibinfo{author}{O.~Fried},
\newblock \bibinfo{title}{Blended diffusion for text-driven editing of natural images},
\newblock in: \bibinfo{booktitle}{IEEE Conference on Computer Vision and Pattern Recognition}, \bibinfo{year}{2022}.
\bibitem[{Pawelczyk et~al.(2021)Pawelczyk, Bielawski, Heuvel, Richter, and Kasneci}]{pawelczyk2021carla}
\bibinfo{author}{M.~Pawelczyk}, \bibinfo{author}{S.~Bielawski}, \bibinfo{author}{J.~v.~d. Heuvel}, \bibinfo{author}{T.~Richter}, \bibinfo{author}{G.~Kasneci},
\newblock \bibinfo{title}{{CARLA}: A {P}ython library to benchmark algorithmic recourse and counterfactual explanation algorithms},
\newblock in: \bibinfo{booktitle}{Neural Information Processing Systems Track on Datasets and Benchmarks}, \bibinfo{year}{2021}.

\end{thebibliography}

\appendix
\newpage
\section{Proof of the closeness of the approximated Gumbel Softmax distribution}\label{Sec:proof}

\begin{lemma}\label{lemma_1}
Given $\pi,\widetilde{x}\in \triangle^{K-1}$ be a probability simplex vector and $\widetilde{x} \geq \widetilde{x}_{min} > 0$. For an arbitrary temperature $\tau\in \mathbb{R}^+$, the lower bound of 
\begin{equation}
    l = \min \sum\limits_i^K \frac{\pi_i}{\widetilde{x}_i^{\tau}}
\end{equation}
has the range
\begin{equation}
    1\leq l \leq K^{\tau}
\end{equation}
The upper bound is:
\begin{equation}
    \sum\limits_i^K \frac{\pi_i}{\widetilde{x}_i^{\tau}} \leq \frac{1}{\widetilde{x}_{min}^{\tau}}
\end{equation}
\end{lemma}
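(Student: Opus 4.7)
The plan is to treat the upper bound and the two-sided bound on $l$ as genuinely independent statements, since they rely on different elementary arguments.

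For the upper bound $\sum_i \pi_i/\widetilde{x}_i^{\tau} \leq 1/\widetilde{x}_{\min}^{\tau}$, I would argue pointwise. Since $\tau>0$, the map $x\mapsto x^{-\tau}$ is monotone decreasing on the positive reals, so the hypothesis $\widetilde{x}_i \geq \widetilde{x}_{\min}>0$ gives $\widetilde{x}_i^{-\tau} \leq \widetilde{x}_{\min}^{-\tau}$ for every $i$. Multiplying each inequality by the nonnegative weight $\pi_i$, summing, and using $\sum_i \pi_i = 1$ (because $\pi \in \Delta^{K-1}$) delivers the bound immediately. No minimization is invoked; the inequality is uniform in $\pi$.

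For the bracket $1 \leq l \leq K^{\tau}$, I interpret $l$ as the minimum of the linear functional $\pi \mapsto \sum_i \pi_i \widetilde{x}_i^{-\tau}$ over $\pi \in \Delta^{K-1}$ with $\widetilde{x}$ held fixed. A linear functional on a simplex attains its minimum at a vertex $e_i$, so $l = \min_i \widetilde{x}_i^{-\tau} = \widetilde{x}_{\max}^{-\tau}$, where $\widetilde{x}_{\max} = \max_i \widetilde{x}_i$. It then remains to bracket $\widetilde{x}_{\max}$ purely from the simplex constraint: nonnegativity together with $\sum_i \widetilde{x}_i = 1$ forces $\widetilde{x}_{\max} \leq 1$, while an averaging (pigeonhole) argument forces $\widetilde{x}_{\max} \geq 1/K$. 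Since $\tau>0$, inverting and raising to the $\tau$ flips these to $1 \leq \widetilde{x}_{\max}^{-\tau} \leq K^{\tau}$, which is exactly $1 \leq l \leq K^{\tau}$.

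I do not expect any serious technical obstacle; both parts reduce to monotonicity of $x^{-\tau}$ plus elementary simplex geometry. The only care needed is to state explicitly over which variable the minimization is taken (I take it to be $\pi$, with $\widetilde{x}$ then varying to produce the two endpoints of the bracket) and to keep the direction of the inequalities consistent when composing a decreasing map with the constraints. I would also record both bounds in the precise form required downstream, since Theorem \ref{Thm:thm_kl} applies $\log$ to the normalizer $\sum_i \overline{\pi}_i/\widetilde{x}_i^{\tau}$ appearing in $p_{GS}$ from Equation \ref{eqn:gumbel_softmax}; the endpoints $1$, $K^\tau$, and $\widetilde{x}_{\min}^{-\tau}$ are exactly what will be substituted into that log term to yield the KL bounds.
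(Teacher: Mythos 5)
Your proof is correct, but for the bracket $1 \le l \le K^{\tau}$ you take a genuinely different route from the paper, driven by a different reading of the (ambiguously stated) minimization. The paper fixes $\pi$ and minimizes over $\widetilde{x}\in\Delta^{K-1}$: it solves the constrained problem by Lagrange multipliers, obtaining the minimizer $\widetilde{x}_i^{\ast}=\pi_i^{1/(\tau+1)}/\sum_j\pi_j^{1/(\tau+1)}$ and optimal value $\bigl(\sum_i\pi_i^{1/(\tau+1)}\bigr)^{\tau+1}$, and then brackets that value in $[1,K^{\tau}]$ via a generalized H\"older (power-mean) inequality as $\pi$ ranges over the simplex. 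You instead fix $\widetilde{x}$ and minimize the linear functional over $\pi$, landing on a vertex so that $l=\widetilde{x}_{\max}^{-\tau}$, and then use the elementary pigeonhole bracket $1/K\le\widetilde{x}_{\max}\le 1$. Both arguments are valid and yield the same numerical range; more importantly, both deliver the only fact Theorem \ref{Thm:thm_kl} actually consumes, namely the uniform pointwise bound $1\le\sum_i\pi_i\widetilde{x}_i^{-\tau}\le\widetilde{x}_{\min}^{-\tau}$ for all $\pi,\widetilde{x}$ in the simplex (the lower bound is used to drop the normalizer $\sum_j\pi_j\widetilde{x}_j^{-\tau}$ from a denominator, the upper bound to control it from above). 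Your version is more elementary --- no Lagrangian, no H\"older --- and in fact the lower bound follows even more directly from $\widetilde{x}_i\le 1\Rightarrow\widetilde{x}_i^{-\tau}\ge 1$ and convex combination; what the paper's computation buys in exchange is the explicit closed form of the minimizer and minimum value as a function of $\pi$, which could give a sharper, $\pi$-dependent constant but is not exploited downstream. Your treatment of the $\widetilde{x}_{\min}^{-\tau}$ upper bound is identical to the paper's. The one thing worth stating explicitly if you write this up is which variable your $\min$ ranges over, since it differs from the paper's and changes what $l$ literally denotes even though the resulting bracket coincides.
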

\begin{proof}
We know that $f(u) = u^{\tau}$ for $\tau >0$ is strictly convex. Thus, we know that $\sum\limits_i^K \frac{\pi_i}{\widetilde{x}_i^{\tau}}$ is also strictly convex, indicating that any minimum point is the global minimum. 
Then, we can set up the constrained optimization problem:
\begin{align}
    \min_{x\in \Delta^{K-1}} &F(\widetilde{x})= \sum\limits_i^K \pi_i\widetilde{x}_i^{-\tau} \\
    \text{subject} &  \quad \sum\limits_j^K \widetilde{x}_j = 1
\end{align}
The Lagrangian multiplier can be established as:
\begin{equation}
    \mathcal{L}(\widetilde{x}, \lambda) = \sum\limits_i^K \pi_i\widetilde{x}_i^{-\tau} + \lambda (\sum\limits_j^K \widetilde{x}_j - 1)
\end{equation}
Taking the derivative w.r.t. $\widetilde{x}_j$:
\begin{equation}
    \frac{\partial \mathcal{L}}{\partial \widetilde{x}_j} = \pi_j\widetilde{x}_j^{-\tau-1}(-\tau)+\lambda  = 0 \quad\rightarrow \quad \widetilde{x}_j = (\frac{\tau\pi_j}{\lambda})^{\frac{1}{\tau+1}}
\end{equation}
Enforcing the constraint $\sum_i^K \widetilde{x}_i = 1$ leads to:
\begin{equation}
    \lambda = \tau (\sum\limits_i^K \pi_i^{\frac{1}{\tau+1}})^{\tau+1}
\end{equation}
Combining together:
\begin{equation}
    \widetilde{x}_i^{\ast} = \frac{\pi_i^{\frac{1}{\tau+1}}}{\sum\limits_j^K\pi_j^{\frac{1}{\tau+1}}}
\end{equation}
The minimum value of $F(\widetilde{x})$ is:
\begin{equation}
    F(\widetilde{x}^{\ast}) = (\sum\limits_i^K \pi_i^{\frac{1}{\tau+1}})^{\tau+1}
\end{equation}

By generalized Hölder inequality, this has the range:
\begin{equation}
    1 \leq F(\widetilde{x}^{\ast}) \leq K^{\tau}
\end{equation}
The upper bound can be derived by setting $\widetilde{x}_i = \widetilde{x}_{min}$. Then, the upper bound becomes:
\begin{equation}
    \sum\limits_i^K \frac{\pi_i}{\widetilde{x}_i^{\tau}} \leq \frac{1}{\widetilde{x}_{min}^{\tau}}\sum\limits_{i=1}^K \pi_i = \frac{1}{\widetilde{x}_{min}^{\tau}}
\end{equation}
\end{proof}

    

\newpage
\begin{theorem}
vLet $\widetilde{x}, \pi\in \Delta^{K-1}$ and the temperature $\tau\in \mathbb{R}^+$. Define $\widetilde{x}_{min}$ the minimum value $\widetilde{x}$ can take. The KL divergence between $p_{GS}$ defined in Equation \ref{eqn:gumbel_softmax} and its approximation $p_{\theta}$ in Equation \ref{eqn:approx_gs} is bounded as follows:
\begin{align}
    \mathrm{KL}(p_{GS}\|p_{\theta}) < &  -K(\tau+1)\log \widetilde{x}_{min} + (K-1)\log\tau  + (K-1)\log[1-\widetilde{x}_{min}]  \nonumber\\
    &+\log\Gamma(K) + K\log[(1-\widetilde{x}_{min})/(K-1)!]\nonumber\\
    \mathrm{KL}(p_{GS}\|p_{\theta}) > & ~  K\tau\log\widetilde{x}_{min} + (K-1)\log\tau + (K-1)\log[\widetilde{x}_{min}] + \log\Gamma(K) \nonumber\\
    &+ K\log[\widetilde{x}_{min}/(K-1)!] \nonumber
\end{align}
\end{theorem}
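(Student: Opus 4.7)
The plan is to compute the KL divergence directly from its definition and then uniformly bound the integrand in $\widetilde{x}$, so that the bounds carry over to the expectation. First I would expand
\[
\mathrm{KL}(p_{GS}\|p_\theta) = \mathbb{E}_{p_{GS}}\!\bigl[\log p_{GS}(\widetilde{x}) - \log p_\theta(\widetilde{x})\bigr]
\]
and use the explicit forms in Equations \ref{eqn:gumbel_softmax} and \ref{eqn:approx_gs}, which after simplification yields
\[
\log p_{GS} - \log p_\theta = \log\Gamma(K) + (K-1)\log\tau - K\log\Bigl(\textstyle\sum_i \pi_i/\widetilde{x}_i^{\tau}\Bigr) - (\tau+1)\!\sum_i\log\widetilde{x}_i + \sum_i(1-\widetilde{x}_i)\log\pi_i + \log Z(\pi).
\]
Because each of these terms depends only on $\widetilde{x}$, $\pi$, $\tau$, and $K$, it suffices to bound each one by constants (in $\widetilde{x}$) and then take expectations trivially.

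Next I would bound the five non-constant pieces in turn. For the term $-K\log(\sum_i \pi_i/\widetilde{x}_i^{\tau})$, I would invoke Lemma \ref{lemma_1}, which bounds the inner sum between a quantity no smaller than $1$ and no larger than $1/\widetilde{x}_{min}^{\tau}$; this translates into the range $[K\tau\log\widetilde{x}_{min},\,0]$. For $-(\tau+1)\sum_i \log\widetilde{x}_i$, I would use the constraint $\widetilde{x}_i \in [\widetilde{x}_{min},\,1-\widetilde{x}_{min}]$ arising from the truncated simplex, giving the pointwise bounds $-K(\tau+1)\log(1-\widetilde{x}_{min})$ and $-K(\tau+1)\log\widetilde{x}_{min}$ depending on direction. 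For $\sum_i(1-\widetilde{x}_i)\log\pi_i$, I would apply the analogous assumption that $\pi_i \in [\widetilde{x}_{min},\,1-\widetilde{x}_{min}]$ together with $\sum_i(1-\widetilde{x}_i) = K-1$ to obtain bounds of the form $(K-1)\log(1-\widetilde{x}_{min})$ and $(K-1)\log\widetilde{x}_{min}$.

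The remaining and most delicate piece is $\log Z(\pi) = \log\int_{\Delta^{K-1}_{\widetilde{x}_{min}}} \prod_i \pi_i^{\widetilde{x}_i}\,d\widetilde{x}$, which has no closed form. Here I would pull the integrand out of the integral using its pointwise extrema: since $\pi_i \in [\widetilde{x}_{min},1-\widetilde{x}_{min}]$ and $\sum_i \widetilde{x}_i = 1$, we get $\prod_i \pi_i^{\widetilde{x}_i} \in \bigl[\,\widetilde{x}_{min}^{K},\,(1-\widetilde{x}_{min})^{K}\,\bigr]$ (by factoring $\log\prod_i\pi_i^{\widetilde{x}_i} = \sum_i\widetilde{x}_i\log\pi_i$ and bounding coordinate-wise using $\log\pi_i \in [\log\widetilde{x}_{min},\log(1-\widetilde{x}_{min})]$). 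The Lebesgue volume of the simplex is $1/(K-1)!$, so multiplying gives bounds on $Z$, and hence on $\log Z$, of the form $K\log[(1-\widetilde{x}_{min})/(K-1)!]$ above and $K\log[\widetilde{x}_{min}/(K-1)!]$ below. Summing all the pieces and noting that every bound is $\widetilde{x}$-independent (so passes through the expectation unchanged), I would recover the upper and lower estimates stated in the theorem.

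The main obstacle will be the normalizer $Z(\pi)$: the integral over the truncated simplex is intractable, so obtaining a bound of the \emph{correct} asymptotic form in $K$, $\tau$, and $\widetilde{x}_{min}$ requires combining the volume factor $1/(K-1)!$ with the multiplicative extremal bounds on $\prod_i \pi_i^{\widetilde{x}_i}$, while being careful that the term $\sum_i(1-\widetilde{x}_i)\log\pi_i$ from $p_{GS}$ and the $\pi$-dependent contribution inside $\log Z$ are each handled by the same uniform hypothesis $\pi_i \geq \widetilde{x}_{min}$ so that the final expression depends only on $\widetilde{x}_{min}$, $\tau$, and $K$. Everything else is careful bookkeeping of sign and the direction of each inequality.
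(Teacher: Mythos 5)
Your overall strategy --- expanding $\log p_{GS}-\log p_{\theta}$ term by term and bounding each piece uniformly in $\widetilde{x}$ via Lemma \ref{lemma_1}, the truncated-simplex constraints $\widetilde{x}_i,\pi_i\in[\widetilde{x}_{min},1-\widetilde{x}_{min}]$, and the Lebesgue volume $1/(K-1)!$ of $\Delta^{K-1}$ --- is exactly the paper's. Your handling of $\sum_i\pi_i/\widetilde{x}_i^{\tau}$ through Lemma \ref{lemma_1}, of $-(\tau+1)\sum_i\log\widetilde{x}_i$ via the coordinate range, and of $\sum_i(1-\widetilde{x}_i)\log\pi_i$ using $\sum_i(1-\widetilde{x}_i)=K-1$ all mirror the appendix argument.

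The genuine gap is in your bound on the normalizer. You claim $\prod_i\pi_i^{\widetilde{x}_i}\in\bigl[\widetilde{x}_{min}^{K},(1-\widetilde{x}_{min})^{K}\bigr]$, but this is a weighted geometric mean whose exponents sum to $1$, not to $K$: from $\sum_i\widetilde{x}_i\log\pi_i$ with $\log\pi_i\in[\log\widetilde{x}_{min},\log(1-\widetilde{x}_{min})]$ you only get $\prod_i\pi_i^{\widetilde{x}_i}\in[\widetilde{x}_{min},\,1-\widetilde{x}_{min}]$, hence $\log Z(\pi)\le\log[(1-\widetilde{x}_{min})/(K-1)!]$ --- a single logarithm, not $K$ of them. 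Since $(1-\widetilde{x}_{min})/(K-1)!<1$ for $K\ge 2$, the correct one-$\log$ bound is strictly larger than the term $K\log[(1-\widetilde{x}_{min})/(K-1)!]$ appearing in the theorem, so your corrected route establishes only a weaker upper bound and does not recover the stated constant; the lower bound survives because there the inequality $\log[\widetilde{x}_{min}/(K-1)!]\ge K\log[\widetilde{x}_{min}/(K-1)!]$ points the right way. For reference, the paper manufactures the factor $K$ differently: it carries $Z(\pi)$ inside the product over the $K$ coordinates, i.e.\ it effectively works with $Z(\pi)^{K}$ in the likelihood ratio, whereas your single-$Z$ expansion of $\log p_{\theta}$ is the correct one. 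The mismatch is therefore between a correct expansion and the constant the theorem asserts, and it should be resolved explicitly rather than patched with the spurious $K$-th power.
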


\begin{proof}
    
We want to bound the KL divergence between 
\begin{equation}
    p_{\theta}(\widetilde{x}|\pi) = \frac{1}{Z(\pi)}\prod_i^K \pi^{\widetilde{x}_i}
\end{equation}
and
\begin{equation}
    p(\widetilde{x}|\pi,\tau) = \Gamma(K)\tau^{K-1}\bigg(\sum_i^K\frac{\pi_i}{\widetilde{x}_i^{\tau}}\bigg)^{-K}\prod_i^K \frac{\pi_i}{\widetilde{x}_i^{\tau+1}}
\end{equation}
where $K$ is the number of classes for the categorical variable and $\pi,\widetilde{x}\in \triangle^{K-1}$.

\subsubsection{Derivation of the upper bound}
\begin{align}
    \mathrm{KL}(p(\widetilde{x}|\pi,\tau) \| p_{\theta}(\widetilde{x}|\pi)) & =  \mathbb{E}\Bigg[\log\bigg[\Gamma(K)\tau^{K-1}\prod_{i=1}^K \frac{Z(\pi)\pi_i^{1 - \widetilde{x}_i}\widetilde{x}_i^{-\tau-1}}{\sum\limits_j^K \pi_j \widetilde{x}_j^{-\tau}}\bigg] \Bigg] \\
    &=\mathbb{E}\Bigg[\log\Gamma(K) + (K-1)\log\tau + \sum\limits_i^K\log \frac{Z(\pi)\pi_i^{1 - \widetilde{x}_i}\widetilde{x}_i^{-\tau-1}}{\sum\limits_j^K \pi_j \widetilde{x}_j^{-\tau}}\Bigg] \\
    &\leq \mathbb{E}\Bigg[(K-1)\log\tau + \sum\limits_i^K\log \frac{Z(\pi)(1 - \widetilde{x}_{min})^{1 - \widetilde{x}_i}\widetilde{x}_i^{-\tau-1}}{\sum\limits_j^K \pi_j \widetilde{x}_j^{-\tau}}\Bigg] + \log\Gamma(K)\nonumber \\
    &\text{by} \quad \pi_i^{1 - \widetilde{x}_i} \leq (1 - \widetilde{x}_{min})^{1 - \widetilde{x}_i}\\
    &\leq \mathbb{E}\Bigg[\sum\limits_i^K\log \frac{[\max_{1\leq j \leq K}\pi_j](1 - \widetilde{x}_{min})^{1 - \widetilde{x}_i}\widetilde{x}_i^{-\tau-1}}{(K-1)!\sum\limits_j^K \pi_j \widetilde{x}_j^{-\tau}}\Bigg] + (K-1)\log\tau \nonumber\\
    &+ \log\Gamma(K) \quad \text{Lebesgue measure of } \Delta^{K-1}  \\ 
    & \leq \mathbb{E}\Bigg[\sum\limits_i^K\log \frac{[\max_{1\leq j \leq K}\pi_j](1 - \widetilde{x}_{min})^{1 - \widetilde{x}_i}\widetilde{x}_i^{-\tau-1}}{(K-1)!}\Bigg] + (K-1)\log\tau + \log\Gamma(K) \nonumber \\
    &\quad \text{by Lemma } \ref{lemma_1} \\
    &= \mathbb{E}\Bigg[\sum\limits_i^K\log \widetilde{x}_i^{-\tau-1}\Bigg] + (K+1)\log\tau + \log\Gamma(K) + (K-1)\log(1-\widetilde{x}_{min}) \nonumber \\
    &+ K\log\frac{1-\widetilde{x}_{min}}{(K-1)!}   \\
    &= \mathbb{E}\Bigg[\sum\limits_i^K(-\tau-1)\log \widetilde{x}_i\Bigg] + (K-1)\log\tau + \log\Gamma(K) \nonumber \\
    &+ (K-1)\log(1-\widetilde{x}_{min}) + K\log\frac{1-\widetilde{x}_{min}}{(K-1)!}  \\
    & < -K(\tau+1) \log (1-\widetilde{x}_{min}) + (K-1)\log\tau + \log\Gamma(K) \nonumber \\
    &+  (K-1)\log(1-\widetilde{x}_{min}) + K\log\frac{1-\widetilde{x}_{min}}{(K-1)!}
\end{align}

\subsubsection{Derivation of the lower bound}
\begin{align}
    \mathrm{KL}(p(\widetilde{x}|\pi,\tau) \| p_{\theta}(\widetilde{x}|\pi)) &= \mathbb{E}\Bigg[\log\bigg[\Gamma(K)\tau^{K-1}\prod_{i=1}^K \frac{Z(\pi)\pi_i^{1 - \widetilde{x}_i}\widetilde{x}_i^{-\tau-1}}{\sum\limits_j^K \pi_j \widetilde{x}_j^{-\tau}}\bigg] \Bigg] \\
    &=\mathbb{E}\Bigg[\log\Gamma(K) + (K-1)\log\tau + \sum\limits_i^K\log \frac{Z(\pi)\pi_i^{1 - \widetilde{x}_i}\widetilde{x}_i^{-\tau-1}}{\sum\limits_j^K \pi_j \widetilde{x}_j^{-\tau}}\Bigg] \\
    &\geq \mathbb{E}\Bigg[\log\Gamma(K) + (K-1)\log\tau + \sum\limits_i^K\log \frac{Z(\pi)\pi_i^{1 - \widetilde{x}_i}\widetilde{x}_i^{-\tau-1}}{\widetilde{x}_{min}^{-\tau}}\Bigg] \nonumber \\
    &\text{by Lemma } \ref{lemma_1}  \\
    &= \mathbb{E}\Bigg[(K-1)\log\tau + \sum\limits_i^K\log \frac{Z(\pi)\pi_i^{1 - \widetilde{x}_i}\widetilde{x}_i^{-\tau-1}}{\widetilde{x}_{min}^{-\tau}}\Bigg] + \log\Gamma(K) \\
    &> \mathbb{E}\Bigg[(K-1)\log\tau + \sum\limits_i^K\log \frac{\widetilde{x}_{min}\pi_i^{1-\widetilde{x}_i}\widetilde{x}_i^{-\tau-1}}{(K-1)!\widetilde{x}_{min}^{-\tau}}\Bigg] + \log\Gamma(K) \nonumber \\
    & \text{by Lebesgue measure of } \Delta^{K-1} \\
    &> \mathbb{E}\Bigg[(K-1)\log\tau + \sum\limits_i^K (-\tau-1)(\log\widetilde{x}_i -\log \widetilde{x}_{min}) \Bigg] + \log\Gamma(K) \nonumber \\ 
    &+ K\log\frac{1}{(K-1)!} + (K-1)\log\widetilde{x}_{min} \\ 
    &= \mathbb{E}\Bigg[(K-1)\log\tau + \sum\limits_i^K (\tau+1)(\log \widetilde{x}_{min} -\log\widetilde{x}_i)\Bigg] + \log\Gamma(K) \nonumber \\ 
    &+ K\log\frac{1}{(K-1)!} + (K-1)\log\widetilde{x}_{min} \\ 
    &>  (K-1)\log\tau + K(\tau+1)\log\widetilde{x}_{min} +\log\Gamma(K) \nonumber \\
    &+ K\log\frac{1}{(K-1)!} + (K-1)\log\widetilde{x}_{min} 
\end{align}
\end{proof}

\end{document}